\newtheorem{definition}{Definition}
\newtheorem{challenge}{Challenge}
\newtheorem{theorem}{Theorem}
\newtheorem{proof}{Proof}
\newtheorem{implications}{Implication}
\newcommand*\filledcircled[2][\normalsize]{%
  \tikz[baseline=(char.base)]{
    \node[shape=circle,fill,inner sep=0.5pt] (char) {#1\textcolor{white}{#2}};}}
\newcommand{\mymethod}{\textbf{\texttt{CAMEL}}}
\title{Drift-aware Collaborative Assistance Mixture of Experts for Heterogeneous Multistream Learning}
\author{En Yu, Jie Lu, Kun Wang, Xiaoyu Yang, Guangquan Zhang}
\begin{document}

\maketitle

\begin{abstract}
Learning from multiple data streams in real-world scenarios is fundamentally challenging due to intrinsic heterogeneity and unpredictable concept drifts. Existing methods typically assume homogeneous streams and employ static architectures with indiscriminate knowledge fusion, limiting generalizability in complex dynamic environments. To tackle this gap, we propose~\mymethod, a dynamic \textbf{C}ollaborative \textbf{A}ssistance \textbf{M}ixture of \textbf{E}xperts \textbf{L}earning framework. It addresses heterogeneity by assigning each stream an independent system with a dedicated feature extractor and task-specific head. Meanwhile, a dynamic pool of specialized private experts captures stream-specific idiosyncratic patterns. Crucially, collaboration across these heterogeneous streams is enabled by a dedicated assistance expert. This expert employs a multi-head attention mechanism to distill and integrate relevant context autonomously from all other concurrent streams. It facilitates targeted knowledge transfer while inherently mitigating negative transfer from irrelevant sources. Furthermore, we propose an Autonomous Expert Tuner (AET) strategy, which dynamically manages expert lifecycles in response to drift. It instantiates new experts for emerging concepts (freezing prior ones to prevent catastrophic forgetting) and prunes obsolete ones. This expert-level plasticity provides a robust and efficient mechanism for online model capacity adaptation. Extensive experiments demonstrate \mymethod’s superior generalizability across diverse multistreams and exceptional resilience against complex concept drifts.
\end{abstract}


\section{Introduction}
\label{sec:intro}
Learning from streaming data has become fundamental to modern intelligent systems, enabling real-time decision-making in dynamic and continuously evolving environments~\cite{cacciarelli2024active,marcu2024big,agrahari2022concept}. A central challenge in streaming learning is concept drift—the phenomenon where the underlying data distribution changes over time—requiring models to continuously adapt in order to maintain predictive performance~\cite{lu2018learning}. While most streaming learning studies focus on single-stream settings~\cite{jiao2022incremental,wen2023onenet}, many real-world applications inherently involve multiple concurrent data streams. For example, a smart city platform integrates traffic sensor feeds, weather reports, public transportation logs, and social media sentiment streams. These streams evolve independently yet often carry latent correlations that, if exploited effectively, can provide complementary information for more accurate and robust decision-making~\cite{xiang2023concept,read2025supervised,ma2024multiview}. Capturing such dynamic inter-stream relationships while adapting to concept drift is crucial for advancing streaming learning toward practical deployment~\cite{yang2025adapting,xu2025coral}.

Despite recent progress, existing multistream learning methods are caught in a critical dilemma. On the one hand, most approaches operate under a homogeneous space assumption, which presumes that all streams share the same feature and label spaces~\cite{yu2024online, jiao2023reduced}. This assumption fails to deal with the intrinsic heterogeneity commonly present in practical applications, where streams may originate from distinct feature spaces or predictive objectives due to different data sources~\cite{korycki2021concept,panchal2023flash}. On the other hand, prevailing methods typically employ a monolithic and static architecture, either retrained or incrementally fine-tuned~\cite{xu2025drift2matrix,wang2021evolving}. This design suffers from critical limitations in multistream environments, e.g., retraining induces catastrophic forgetting of prior knowledge, while fine-tuning becomes fragile under asynchronous drifts, where adapting to one stream’s evolution can degrade performance on others. The lack of structural flexibility and targeted adaptation thus prevents robust performance across heterogeneous evolving streams. 

To bridge this gap, we formalize the problem as Heterogeneous Multistream Learning (HML), where multiple concurrent data streams exhibit intrinsic heterogeneity, latent inter-stream correlations, and asynchronous concept drifts. Specifically, \textit{\textbf{1) Intrinsic Heterogeneity}}: feature and label spaces across streams differ in dimensionality and semantics, precluding direct application of homogeneous models; \textit{\textbf{2) Knowledge Fusion}}: while streams may contain useful correlations, such relationships are dynamic and selective, requiring mechanisms that can leverage relevant information while avoiding negative transfer from irrelevant streams; and \textit{\textbf{3) Asynchronous Concept Drifts}}: streams evolve independently with diverse drift patterns, demanding flexible and stream-specific adaptation. These challenges necessitate a generalized and drift-aware learning framework that can handle stream-wise specialization while enabling intelligent knowledge fusion across heterogeneous drifting streams.

To address these challenges, we propose \mymethod, a dynamic Collaborative Assistance Mixture of Experts Learning framework tailored for heterogeneous data streams. It introduces a modular drift-aware architecture that explicitly addresses the three core challenges. First, to handle intrinsic heterogeneity, we assign each stream a specific learning system comprising a dedicated feature extractor, a private expert pool, and a task-specific prediction head, ensuring stream-specific specialization. Second, to enable adaptive and selective knowledge fusion, \mymethod~incorporates a novel collaborative assistance mechanism. It employs a dedicated attention-based expert per stream dynamically distills relevant contextual information from all other concurrent streams on demand, effectively capturing latent inter-stream correlations while inherently mitigating negative transfer~\cite{vaswani2017attention}. Third, to cope with asynchronous concept drifts, an Autonomous Expert Tuner (AET) is proposed, which monitors drift signals by a distribution-based drift detector and performance indicators per stream, dynamically adding new experts for emerging concepts and pruning obsolete ones. This expert-level plasticity allows our method to autonomously restructure its capacity and specialization over time. Extensive experiments on diverse synthetic and real-world multistream scenarios demonstrate the superior adaptability and generalization ability of our method compared to existing state-of-the-art methods. In summary, our main contributions are:
\begin{itemize}
    \item We propose \mymethod, a generalized and dynamic MoE framework that learns from multiple data streams characterized by heterogeneous features, diverse label spaces and asynchronous concept drifts.
    \item We introduce a collaborative assistance mechanism, where dedicated attention-based experts perform targeted knowledge fusion, providing a effective and adaptive solution to the challenge of positive knowledge transfer.
    \item We design an autonomous tuning strategy that manages the expert lifecycle at a modular level (adding/pruning experts), offering a more robust and interpretable way for drift adaptation.
    \item Comprehensive experiments and theoretical analysis validate the generalizability and robustness of our method across complex synthetic and real-world HML scenarios.
\end{itemize}

\begin{figure*}[thbp]
    \centering
    \includegraphics[width=\textwidth]{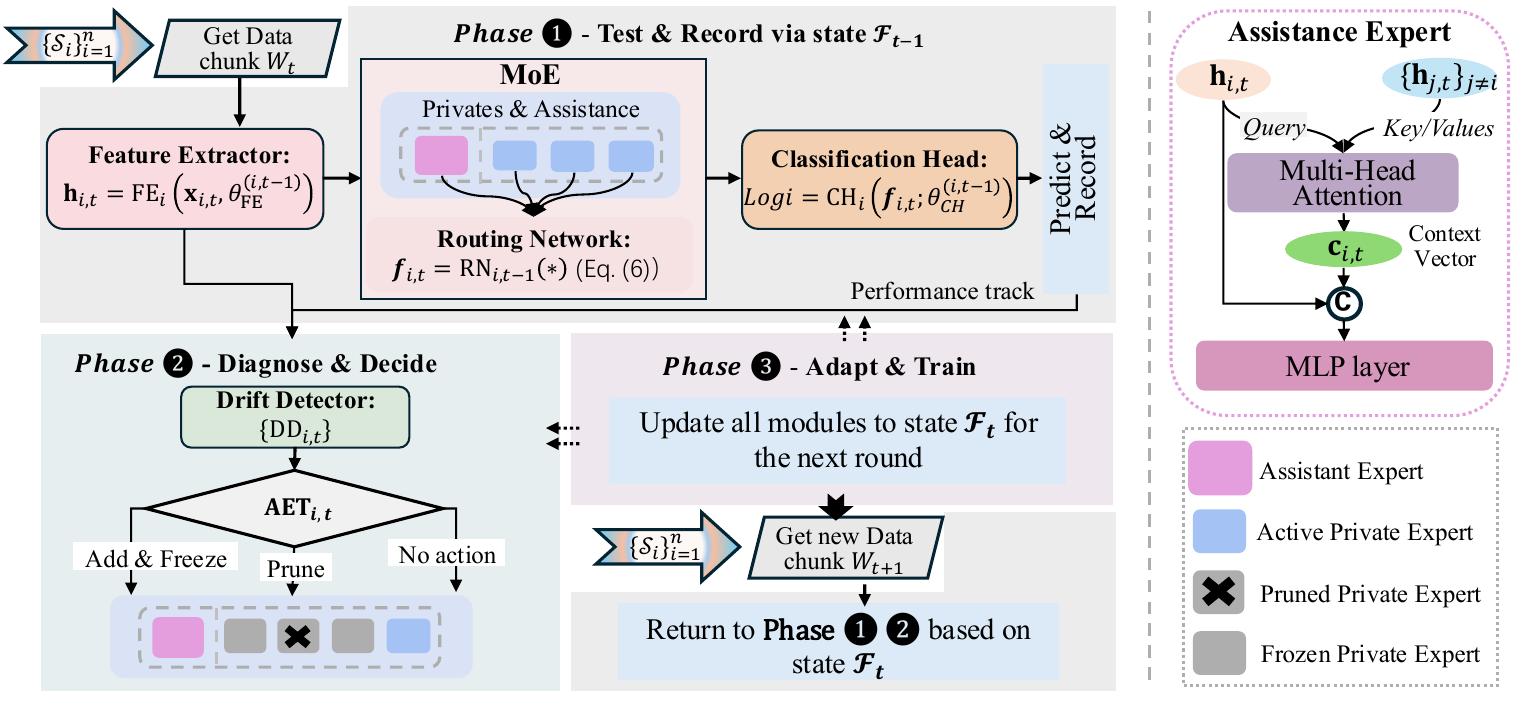}
    \caption{The overall framework of \mymethod. Concretely, each stream's MoE module leverages a dynamic pool of private experts and a dedicated \textit{assistance expert} that performs collaborative fusion via multi-head attention. The entire system follows a Test-Diagnose-Adapt cycle where an Autonomous Expert Tuner (AET) dynamically manages the expert lifecycle (adding/freezing/pruning) in response to drift and performance signals, ensuring continuous adaptation in the HML environment.}
    \label{fig:architecture}
\end{figure*}

\section{Related Works}
\label{sec:relatedworks}
\noindent
\textbf{Stream Learning.}
Early research in streaming learning primarily addressed single-stream scenarios with concept drift~\cite{wan2024online,li2022ddg,kim2024quilt,yu2025learning}, broadly falling into two paradigms: \emph{1) informed methods} integrate explicit drift detection mechanisms to trigger model adaptation based on distribution variations or error signals~\cite{bifet2007learning,gomes2019streaming,lu2025early}, while \emph{2) adaptive approaches} employ detection-free strategies that continuously adjust model parameters in response to evolving data dynamics~\cite{guo2021selective,brzezinski2013reacting,jiao2022incremental}. Recognizing the ubiquity of concurrent streams, recent multistream learning works can be summarized into two categories: \emph{1) Multistream classification} aims to transfer knowledge from labeled source streams to unlabeled targets, such as MCMO using multi-objective feature selection~\cite{jiao2023reduced}, OBAL dynamically weighting streams via drift-aware boosting~\cite{yu2024online}, and BFSRL learning fuzzy shared representations across streams~\cite{yu2024fuzzy}; \emph{2) Multistream collaborative prediction} exploits complementary information across streams for joint forecasting, typically adopting test-then-adapt schemes.  For instance, Wang et al.~\cite{wang2024adaptive} propose adaptive stacking that selectively retrains models for knowledge fusion during drift adaptation, while Wen et al.~\cite{wen2023onenet} employ dual-branch networks separately modeling temporal and cross-variable dependencies. Similarly, CORAL~\cite{xu2025coral} leverages the kernel-induced self-representation method for co-evolving time series. However, both paradigms predominantly assume homogeneous feature spaces and shared label semantics, fundamentally struggling with heterogeneous heterogeneity and asynchronous drifts. 

\noindent
\textbf{Mixture-of-Experts (MoE).} The MoE paradigm achieves scalable, efficient modeling through conditional computation, where a routing mechanism dynamically activates specialized sub-networks ("experts")~\cite{mu2025comprehensive,lei2024adapted}. This architecture demonstrates strong capabilities in multi-task coordination and continual learning~\cite{qin2020multitask,litheory,lei2024adapted} with its sparse activation property preserving computational efficiency while maintaining high model capacity~\cite{sarkar2023edge,tran2025revisiting}. These inherent advantages naturally align with streaming learning's core challenges, including complex pattern recognition, concept drift adaptation and computational constraints. However, MoE frameworks remain largely unexplored for streaming scenarios while exhibiting critical limitations in HML: expert specialization is statically predefined for coarse task categories without mechanisms to dynamically reconfigure expertise for dynamic scenarios, while routing strategies optimize isolated objectives while neglecting knowledge transfer between complementary experts. Our approach fundamentally advances this paradigm through a correlation-aware expert synthesis framework that jointly models latent task dependencies and expert synergies, enabling real-time expert reorganization and coordinated optimization of both routing precision and cross-expert knowledge transfer, unlocking adaptive capacity allocation for evolving data streams.

\section{Preliminary}
\label{sec:preliminary}



\begin{definition}[Heterogeneous Multistream Learning]
\label{def:hybrid_multistream}
Let $\mathcal{S}=\{\mathcal{S}_i\}_{i=1}^n$ be a set of $n$ concurrent data streams. Each stream $\mathcal{S}_i$ is an ordered sequence of instances $\{(\boldsymbol{x}_{i,t}, y_{i,t})\}_{t=1}^\infty$, where $\boldsymbol{x}_{i,t} \in \mathcal{X}_i \subseteq \mathbb{R}^{D_i}$ is the feature vector from a stream-specific feature space of dimensionality $D_i$, and $y_{i,t} \in \mathcal{Y}_i = \{1, \dots, C_i\}$ is the corresponding class label from a stream-specific label space of size $C_i$. 
The underlying joint distribution $P_t^{(i)}(\boldsymbol{x}_{i,t},y_{i, t})$ for each stream $\mathcal{S}_i$ can change over time, exhibiting concept drift. The goal in HML is to design an adaptive mechanism $\mathcal{F}: \{\mathcal{X}_i \to \mathcal{Y}_i\}_{i=1}^n$ that continuously adapts to predict new data from each stream.
\end{definition}
As mentioned before, three main challenges must be addressed simultaneously in HML, i.e., \textit{Intrinsic Heterogeneity}, \textit{Knowledge Fusion} and \textit{Asynchronous Drifts}. These challenges are defined as follows,
\begin{challenge}[Intrinsic Heterogeneity]
\label{cha:heterogeneity}
Real-world multistream scenarios exhibit intrinsic heterogeneity in both feature and label spaces across streams. For any pair of streams \(\mathcal{S}_i\) and \(\mathcal{S}_j\) (\(i \neq j\)), their respective feature spaces may differ in dimensionality (\(D_i \neq D_j\)) and attribute structure (\(\mathcal{X}_i \neq \mathcal{X}_j\)), while their label spaces can define disjoint predictive tasks (\(\mathcal{Y}_i \neq \mathcal{Y}_j\) implying \(C_i \neq C_j\)). 
\end{challenge}

    

\begin{challenge}[Knowledge Fusion]
\label{cha:fusion}
While the streams $\mathcal{S}$ are heterogeneous, they may contain latent time-varying correlations that can be exploited for mutual benefit. The core challenge is to design a mechanism for selective and adaptive knowledge fusion. This requires simultaneously achieving two conflicting objectives for any given stream $\mathcal{S}_i$. First, the model must be able to identify and leverage useful contextual information from all other concurrent streams $\{\mathcal{S}_j\}_{j \neq i}$ to enhance its predictive capability for $\mathcal{S}_i$. Second, it must be robust to dynamically ignore information from any stream $\mathcal{S}_j$ that is irrelevant or contains misleading patterns, thereby avoiding negative transfer. 
\end{challenge}

\begin{challenge}[Asynchronous Drifts]
\label{cha:asyn}
The non-stationarity of each stream $\mathcal{S}_i$ presents that its data-generating distribution $P_t^{(i)}$ evolves with uncoordinated and diverse dynamics. These concept drifts are both \textit{asynchronous} and \textit{diverse}. Formally, for any two streams $\mathcal{S}_i$ and $\mathcal{S}_j$ (where $i \neq j$), $\exists t$, $P_t^{(i)}(y|\boldsymbol{x}) \neq P_{t+1}^{(i)}(y|\boldsymbol{x})$ while $P_t^{(j)}(y|\boldsymbol{x}) = P_{t+1}^{(j)}(y|\boldsymbol{x})$. Furthermore, the drift patterns vary across streams in type (e.g., sudden, gradual, incremental). 
\end{challenge}

\section{Methodology}
\label{ssec:methodology}
We present the \mymethod~framework to address the three fundamental challenges in HML. The core innovation lies in a drift-aware autonomous architecture that combines stream-specific specialization with cross-stream collaboration.

\subsection{Overview of~\mymethod}
As shown in Figure~\ref{fig:architecture}, we introduce \mymethod, a framework designed to learn a generalized model $\mathcal{F}$ by processing $n$ data streams $\mathcal{S}$ in a window-based prequential manner. It features a modular architecture where each stream $\mathcal{S}_i$ is assigned a dedicated learning system, including:
\begin{itemize}
    \item A stream-specific Feature Extractor ($\texttt{FE}_i$) for dimensionality and feature space alignment.
    \item A Mixture of Experts (MoE) core, which includes a dynamic pool of Private Experts ($\texttt{PE}_i$), a dedicated Assistance Expert ($\texttt{AE}_i$), and a Routing Network ($\texttt{RN}_i$).
    \item A task-specific Classification Head ($\texttt{CH}_i$) for handling heterogeneous label spaces.
    \item A control loop comprising a Drift Detector ($\texttt{DD}_i$) and an Autonomous Expert Tuner ($\texttt{AET}_i$) for online adaptation.
\end{itemize}

The online learning process begins with an initial model trained on the first window $W_0$. Subsequently, for each incoming data window $W_t$ ($t \geq 1$), the system executes a \textbf{Test-Diagnose-Adapt} cycle, and the whole process is summarized in Algorithm~\ref{alg:fullProcess}.

\paragraph{Phase \filledcircled[\small]{1} -- Test and Record.}
The cycle begins by evaluating the current model state $\mathcal{F}_{t-1}$ (trained on $W_{t-1}$) on the new data of window $W_t$. For each instance $(\boldsymbol{x}_{i,t}, y_{i,t})$, its feature vector $\boldsymbol{x}_{i,t}$ is first projected by $\texttt{FE}_{i, t-1}$ to an aligned representation $\boldsymbol{h}_{i,t}$. It is then processed by the dynamic MoE. Concretely, the $\texttt{RN}_{i, t-1}$ computes routing weights to combine outputs from the $\texttt{PE}_i(t-1)$ pool, which captures idiosyncratic patterns, and the $\texttt{AE}_{i, t-1}$, which performs collaborative fusion by attending to features $\{\boldsymbol{h}_{j,t}\}_{j \neq i}$ from all other streams. The resulting integrated feature vector is finally passed to the task-specific $\texttt{CH}_{i, t-1}$ to produce a prediction $\hat{y}_{i,t}$. The performance (e.g., accuracy) against the true label $y_{i,t}$ is then recorded for the subsequent phase.

\paragraph{Phase \filledcircled[\small]{2} -- Diagnose and Decide.}
Following the test, the system diagnoses the state of each stream by the drift detector $\texttt{DD}_i$. It analyzes the distribution of features $\{\boldsymbol{h}_{i,t}\}$ from $W_t$ to detect drift in $P_t^{(i)}$. Concurrently, an autonomous expert tuner $\texttt{AET}_{i,t}$ evaluates the performance metrics from the test phase and the long-term utilization statistics of its experts. Based on these evidences, i.e., the drift signal and performance analysis, the $\texttt{AET}_{i,t}$ makes an adaptation decision: it may expand the private expert pool $\texttt{PE}_i(t)$ by adding a new expert to learn an emerging concept, or prune an underutilized expert to maintain model parsimony.

\paragraph{Phase \filledcircled[\small]{3} -- Adapt and Train.}
Finally, the model architecture is updated based on the decisions from the diagnosis phase. The potentially modified model is then trained on the data from window $W_t$ via an end-to-end process. The total loss aggregated from all stream-specific classification heads is back-propagated through the entire network. This step refines the parameters of all active components, preparing the system for the next window $W_{t+1}$. This cyclical process allows \mymethod~to continuously learn, adapt, and specialize in a non-stationary multistream environment.

\begin{algorithm}[th]
\caption{\mymethod: Online Learning Process}
\label{alg:fullProcess}
\begin{algorithmic}[1]
\REQUIRE Data streams $\{\mathcal{S}_i\}_{i=1}^n$, Window size $|W_{i}|$, Total windows $T_{max}$.
\ENSURE Predicted labels.
\STATE \textcolor{gray}{\% Initial training on the first window $W_0$}
\STATE $W_{0} \in \{\mathbf{X}_{i, 0}, Y_{i, 0}\}_{i=1}^{n} $  $ \leftarrow $ $\text{GetData}(\{S_i\}_{i=1}^{n}, |W_{i}|, 0)$.
\STATE $\mathcal{F}_0 \leftarrow \text{Train}(W_{0})$. 
\item[] \textcolor{gray}{\% Test-then-Adapt loop for subsequent windows.}
\FOR{$t = 1:T_{max}$}
    \STATE $W_{t} \in \{\mathbf{X}_{i, t}, Y_{i, t}\}_{i=1}^{n} \leftarrow \text{GetWindow}(\{S_i\}_{i=1}^{n}, l, t)$
\item[] \textcolor{gray}{\% Phase \filledcircled[\small]{1} -- Test \& performance record.}
    \STATE $\{\texttt{Perf}_{i,t}\}_{i=1}^n \leftarrow \text{Test}(\mathcal{F}_{t-1}, W_{t})$
    \STATE Update $\{\texttt{AET}_{i,t}\}$ with stream-specific performances $\{\texttt{Perf}_{i,t}\}$.
\item[] \textcolor{gray}{\% Phase \filledcircled[\small]{2} -- Diagnose \& Decide.}
    \FOR{$i = 1: n$}
        \STATE $\texttt{drift\_signal}_{i,t} \leftarrow \texttt{DD}_{i,t}.\text{update}(H_{i,t})$
        \STATE $(\texttt{action}_{i,t}$, $\texttt{pe\_id}_{i,t})$  $\leftarrow$  $\texttt{AET}_{i,t}(\texttt{drift\_signal}_{i,t}, \texttt{Perf}_{i,t} )$
        \IF{$ \texttt{action}_{i,t}$ is "\texttt{ADD\_PRIVATE}"}
            \STATE Add a new private expert to $\texttt{PE}_i(t)$;
            \STATE Freeze old private experts; 
        \ELSIF{$\texttt{action}_{i,t}$ is "\texttt{PRUNE\_PRIVATE}"}
            \STATE Prune Private Experts $\texttt{PE}_i(t)[\texttt{pe\_id}]$ 
        \ENDIF
    \ENDFOR  
\item[] \textcolor{gray}{\% Phase \filledcircled[\small]{3} -- Adapt \& Train.}
    \STATE $\mathcal{F}_t$ $\leftarrow $$ \text{Train}(W_t)$
    
\ENDFOR
\end{algorithmic}
\end{algorithm}

\subsection{Heterogeneity-Aware Representation}
\label{ssec:tackling_heterogeneity}
To handle the \textit{intrinsic heterogeneity, i.e., Challenge~\ref{cha:heterogeneity}}, our framework employs a hierarchical approach involving feature-level alignment and task-level specialization.

\noindent \textit{1. Feature Alignment.} 
First, to address the feature space heterogeneity ($\mathcal{X}_i \neq \mathcal{X}_j$), each stream $\mathcal{S}_i$ is assigned a dedicated feature extractor $\texttt{FE}_i$. This is a neural network, parameterized by $\theta_{\texttt{FE}}^{(i,t)}$, whose architecture is tailored to the input dimensionality $D_i$. Its primary function is to project the raw feature vector $\boldsymbol{x}_{i,t}$ into a common latent space $\mathcal{H} \subset \mathbb{R}^{D_h}$:
\begin{equation}
    \boldsymbol{h}_{i,t} = \texttt{FE}_i(\boldsymbol{x}_{i,t},  \theta_{\texttt{FE}}^{(i,t)}), i \in [0, n];
\end{equation}
This explicit dimensionality alignment creates a standardized input format for all subsequent expert networks, forming the foundation for inter-stream knowledge fusion.

\noindent
\textit{2. Task Specialization.}
Second, to address the label space heterogeneity ($\mathcal{Y}_i \neq \mathcal{Y}_j$), our framework adopts a multi-task learning paradigm. Each stream $\mathcal{S}_i$ is equipped with an independent task-specific classification head $\texttt{CH}_i$ parameterized by $\theta_{\texttt{CH}}^{(i,t)}$. It is responsible for mapping the final refined feature representation $\boldsymbol{f}_{i,t}$ (derived by Eq.~\eqref{eq:routingFusion}) to the stream's unique label space $\mathcal{Y}_i$:
\begin{equation}
    Logits_{i,t} = \texttt{CH}_{i}(\boldsymbol{f}_{i,t}; \theta_{\texttt{CH}}^{(i,t)}) \in \mathbb{R}^{C_i}, i \in [0, n];
\end{equation}
This architecture ensures that the final decision-making process is tailored to each stream's specific predictive task, whether it is binary classification or multi-class classification with a different number of classes.

\subsection{Adaptive Knowledge Fusion}
\label{ssec:knowledge_fusion_mechanism}
To address the \textit{Knowledge Fusion, i.e., Challenge~\ref{cha:fusion}}, \mymethod~introduces a novel dynamic MoE architecture with collaborative assistance designed to exploit inter-stream correlations while mitigating negative transfer.

\noindent
\textit{1. Private Experts: capturing stream-specific knowledge.}
For each stream $\mathcal{S}_i$, we maintain a dynamic pool of private experts $\texttt{PE}_i(t) = \{pe_{i,j} | j=1, \dots, K_i(t)\}$. Each expert $\texttt{pe}_{i,j}$ is an MLP parameterized by $\theta_{\texttt{pe}}^{(i,j,t)}$, that learns patterns idiosyncratic to stream $\mathcal{S}_i$. It processes the aligned feature $\boldsymbol{h}_{i,t}$ to produce representations in a common expert output space $\mathcal{E} \subset \mathbb{R}^{D_f}$:
\begin{equation}
    \boldsymbol{f}_{i,j,t}^{\texttt{pe}} = \texttt{pe}_{i,j}(\boldsymbol{h}_{i,t}; \theta_{\texttt{pe}}^{(i,j,t)})
\end{equation}

\noindent
\textit{2. Assistance Experts: collaborative knowledge fusion.}
Each stream $\mathcal{S}_i$ is paired with a dedicated assistance expert $\texttt{AE}_i$ parameterized by $\theta_{\texttt{AE}}^{(i,t)}$. This expert's unique role is to perform collaborative knowledge fusion. It takes the target stream's feature $\boldsymbol{h}_{i,t}$ as a \textit{query} and leverages features from all other concurrent streams $\{\boldsymbol{h}_{j,t}\}_{j \neq i}$ as \textit{context} (keys and values)~\cite{vaswani2017attention,zhang2025multimodal}. We employ a multi-head attention mechanism:
\begin{equation}
    \boldsymbol{c}_{i,t} = \text{Attention}(\boldsymbol{h}_{i,t}, \{\boldsymbol{h}_{j,t}\}_{j \neq i})
\end{equation}
The resulting context vector $\boldsymbol{c}_{i,t} \in \mathbb{R}^{D_h}$ is a weighted summary of information from other streams, where the weights are learned based on relevance to $\boldsymbol{h}_{i,t}$. This contextual information is then fused with the input features to produce the assistance expert's output representations:
\begin{equation}
    \boldsymbol{f}_{i,t}^{\texttt{AE}} = \text{MLP}_{\texttt{AE}}^{(i,t)}(\text{Concat}(\boldsymbol{h}_{i,t}, \boldsymbol{c}_{i,t}); \theta_{\texttt{AE}}^{(i,t)}) \in \mathbb{R}^{D_f}
\end{equation}
This end-to-end mechanism allows $\texttt{AE}_{i}$ to learn \textit{what} information to transfer from other streams and \textit{how} to use it to best serve stream $\mathcal{S}_i$.

\noindent
\textit{3. Routing and feature integration}.
A stream-specific routing network $\texttt{RN}_{i}$ parameterized by $\theta_{\texttt{RN}}^{(i,t)}$ determines the credibility of each expert for a given input $\boldsymbol{h}_{i,t}$. It outputs a probability distribution $\boldsymbol{p}_{i,t}$ over the $K_i(t)$ private experts and the assistance expert. The final refined representations $\boldsymbol{f}_{i,t}$ for stream $\mathcal{S}_i$ are a weighted combination of all expert outputs:
\begin{equation}
    \boldsymbol{f}_{i,t} = \boldsymbol{p}_{i,t}[\texttt{AE}_{i}] \cdot \boldsymbol{f}_{i,t}^{\texttt{AE}} + \sum_{j=1}^{K_i(t)} \boldsymbol{p}_{i,t}[\texttt{pe}_{i,j}] \cdot \boldsymbol{f}_{i,j,t}^{\texttt{pe}}
    \label{eq:routingFusion}
\end{equation}
The routing mechanism provides a natural defense against negative transfer as it can learn to assign a near-zero weight to the assistance expert if the external context is irrelevant or even harmful.

\subsection{Drift Detection \& Adaptation}
\label{ssec:online_adaptation_drift}
Our framework's autonomy and ability to handle \textit{asynchronous drifts (Challenge~\ref{cha:asyn})} stem from a per-stream control loop involving a drift detector and an expert tuner.

\noindent
\textit{1. Drift Detection.}
Each stream $\mathcal{S}_i$ is independently monitored by a Maximum Mean Discrepancy (MMD) based drift detector $\texttt{DD}_i$~\cite{wan2024online}. $\texttt{DD}_i$ maintains a reference window $W^{ref}_{i,t}$ of past features $\boldsymbol{h}_s$ and compares it with the features from the current window $W_{i,t}$.
\begin{equation}
\small
\begin{aligned}
    & MMD_{i,t}^2(W_{i,t}, W^{ref}_{i,t}) = \\
   & \left\| \frac{1}{|W_{i,t}|} \sum_{\boldsymbol{h}_{i,t} \in W_{i,t}} \phi(\boldsymbol{h}_{i,t}) - \frac{1}{|W^{ref}_{i,t}|} \sum_{\boldsymbol{h}_{j,t} \in W^{ref}_{i,t}} \phi(\boldsymbol{h}_{j,t}) \right\|_{\mathcal{H}_k}^2
\end{aligned}
\end{equation}
where $\phi$ is a mapping to a Reproducing Kernel Hilbert Space $\mathcal{H}_k$ induced by a kernel~\cite{smola2007hilbert}. If $MMD_{i,t}^2 > \tau_{MMD_i}$, $\texttt{DD}_i$ signals a drift for stream $\mathcal{S}_i$. The reference window $W^{ref}_{i,t}$ is then updated with $W_{i,t}$.

\noindent
\textit{2. Autonomous Expert Tuner.}
To achieve robust and efficient adaptation, our framework employs an Autonomous Expert Tuner ($\texttt{AET}_i$) that governs the lifecycle of private experts for each stream $\mathcal{S}_i$. Relying solely on distribution-based drift detection ($\texttt{DD}_i$) can be suboptimal, as not all statistical shifts necessarily degrade predictive performance~\cite{lu2018learning}, which could lead to unnecessary and costly model adaptations. Conversely, some performance degradation might occur without a detectable distribution shift in the feature space. Therefore, the $\texttt{AET}_i$ integrates two complementary signals, i.e., the drift signal from $\texttt{DD}_i$ and the stream's recent test performance. This expert-level plasticity is the core mechanism for adapting model capacity online:
\begin{itemize}
    \item \textbf{Expert Adding:} A new private expert is added to the pool $\texttt{PE}_i$ only when a drift is detected by $\texttt{DD}_i$ \textbf{and} the stream's test performance $\texttt{Perf}_{i,t}$ exhibits a significant degradation. This conjunctive condition ensures that the model only expands its capacity when there is clear evidence of a detrimental concept change. The new expert is initialized as trainable to learn the emerging concept, while all existing private experts in $\texttt{PE}_i$ are frozen to prevent catastrophic forgetting, thereby preserving knowledge of past concepts.

    \item \textbf{Expert Pruning:} A private expert $\texttt{pe}_{i,j}$ (whether frozen or active) is pruned from $\texttt{PE}_i$ if its long-term average utilization, determined by the routing weights from $\texttt{RN}_i$, falls below a threshold $\tau_{util}$. This proactive mechanism removes irrelevant experts that no longer contribute to the stream's predictions, maintaining model parsimony and preventing the accumulation of obsolete components.
\end{itemize}
Since each $\texttt{AET}_i$ operates independently based on its stream's specific signals, the framework naturally handles asynchronous drifts.

\subsection{Learning Objective}
\label{subsec:optimization}
This method is trained end-to-end. For a given data window $W_t$, the total loss is the sum of the individual cross-entropy losses from each stream-specific classification head:
\begin{equation}
\small
    L_{total}(W_t) = \sum_{i=1}^{n} \mathbb{E}_{(\boldsymbol{x}_{i,j}, y_{i,j}) \in W_{t,i}} \left[ \mathcal{L}_{\texttt{CE}}(\texttt{CH}_i( \boldsymbol{f}_{i,t}), y_{i,t}) \right],
\end{equation}

\begin{table*}[thbp]
  \centering
  \small
  \setlength{\tabcolsep}{3pt}
    \begin{tabular}{cccc>{\bfseries}cccc>{\bfseries}cccc>{\bfseries}cccc>{\bfseries}c}
    \toprule
    \multirow{2}[4]{*}{\textbf{Synthetic}} & \multicolumn{4}{c}{Set 1: Tree (Homo.)} & \multicolumn{4}{c}{Set 2 Hyperplane (Homo.)} & \multicolumn{4}{c}{Set 3 (Hete.)} & \multicolumn{4}{c}{Set 4 (Hete.)} \\
    \cmidrule(lr){2-5}  \cmidrule(lr){6-9}  \cmidrule(lr){10-13}  \cmidrule(lr){14-17}       
     &$\mathcal{S}_1$  &$\mathcal{S}_2$  & $\mathcal{S}_3$ & avg & $\mathcal{S}_1$  & $\mathcal{S}_2$  & $\mathcal{S}_3$ & avg  & SEAa  & RTG   & RBF & avg & LED & LEDDri & Wave &avg\\
    \midrule
    SRP  &  58.47     &  65.14    &  64.63 & 62.74    &  86.37      & 87.59       & 88.21     & 87.39     & 83.35       & \textcolor{red}{70.05}      & 81.18   & 78.19     & 35.18       &\textcolor{red}{36.65}       &\textcolor{blue}{83.80}  &51.88\\
    AMF & 56.18     &  63.76     & 59.59  & 59.84   &  \textcolor{blue}{91.32}     & 90.70      & 90.70     & \textcolor{blue}{90.91} & 83.65       & 66.19       & 90.29  &  80.04   & 37.85      & 25.31       &79.39 & 47.52 \\
    IWE   & 63.49      &  \textcolor{red}{72.35}    & \textcolor{red}{68.39}   & \textcolor{red}{68.07}     &  89.82      &  \textcolor{blue}{91.39}    &  \textcolor{blue}{90.90}    &90.70  & 84.27      & 64.38      & 70.12  & 72.92    & 36.05      & 34.15      & 80.41  & 50.20\\
    MCMO &  64.77     & 67.29  & 66.32   &  66.13     &   82.21    & 85.37 &   85.12   & 84.23    &   -    &   -    &  -  &  - &   -  &   -   &- & -\\
    OBAL  & 65.72  &  67.97  & 65.60  &   66.43    & 84.14      &  86.73     &  88.66     &  86.51 &   -    &   -    &  -  &  - &   -  &   -   & - & - \\
    BFSRL  & 63.37   & 67.42  & 64.39    & 65.06 &84.67    & 87.20  &  88.47     & 86.78         &   -    &  -     &  -  &  - &   -  &   -   &  -  & - \\
   \mymethod  &  \textcolor{red}{65.78}     &  \textcolor{blue}{ 68.27}    & \textcolor{blue}{66.48} & \textcolor{blue}{66.84} &   \textcolor{red}{91.85}  &   \textcolor{red}{92.12}   &   \textcolor{red}{91.84}   & \textcolor{red}{91.94}   & \textcolor{red}{85.14}      &\textcolor{blue}{67.73}     &\textcolor{red}{92.75}   &\textcolor{red}{81.87}    &\textcolor{red}{38.19}     &\textcolor{blue}{35.36}       &\textcolor{red}{85.43}  &\textcolor{red}{53.00} \\
    \midrule
    \midrule
    \multirow{2}[4]{*}{\textbf{Real-World}} & \multicolumn{4}{c}{Set 5: TV News (Homo.)} & \multicolumn{4}{c}{Set 6: Weather (Homo.)} & \multicolumn{4}{c}{Set 7:  Credit card (Hete.)} & \multicolumn{4}{c}{Set 8: CoverT. (Hete.)} \\
    \cmidrule(lr){2-5}  \cmidrule(lr){6-9}  \cmidrule(lr){10-13}  \cmidrule(lr){14-17}      
    & CNN  & BBC & TIMES &avg  & $\mathcal{S}_1$ &  $\mathcal{S}_2$ & $\mathcal{S}_3$ & avg & $\mathcal{S}_1$ & $\mathcal{S}_2$ & $\mathcal{S}_3$  &avg & $\mathcal{S}_1$ & $\mathcal{S}_2$ & $\mathcal{S}_3$ &avg \\
    \midrule
    SRP   &  78.46     &   75.55    & \textcolor{blue}{80.84}  & 78.28 &  \textcolor{blue}{81.46}    &   \textcolor{blue}{77.45}    &   \textcolor{blue}{78.15}  & \textcolor{blue}{79.02} & 77.81      &  \textcolor{red}{82.01}     &  78.04  & \textcolor{blue}{79.29 }    &  \textcolor{red}{87.21}    & 52.99       & 56.36   & 65.52  \\
    AMF  &  \textcolor{blue}{79.25} & \textcolor{blue}{79.49}       &  78.70     &  \textcolor{blue}{79.15}    &  81.37    &  75.70      & 77.91  & 78.33 & \textcolor{blue}{77.86}       &81.40       & 77.88  & 78.39    &  86.15      & 53.62       & 61.75   & \textcolor{blue}{67.17}   \\
    IWE  &  78.66     & 74.42      &77.54 &   76.87  &    80.40   &  76.24     & 74.91   & 77.18 & 75.89      & 80.15      & 75.92   &  77.32    & 72.58      & 51.52      &51.75    & 58.62   \\
    MCMO  &   68.83    & 60.12      & 59.74   &  62.90  &  75.11     & 75.02    & 73.37      & 74.50  &   -    &   -    &  -     &   -  &- & -& -&  -        \\
    OBAL  &   67.72    & 59.39      & 64.42  &  63.84 &  77.46    & 74.35    & 76.21      & 75.97  &  -     &   -    &    -   &   - &-  &-   &-   &         - \\
    BFSRL  &  60.18     &   55.09    & 61.29     & 59.12   & 74.77   & 74.09  & 75.42   & 74.76 &   -    &   -    &    -   &    -   &    -   & -      &     -  &  -       \\
    \mymethod  & \textcolor{red}{80.06}  & \textcolor{red}{79.66}  &\textcolor{red}{80.90}   & \textcolor{red}{80.21}  & \textcolor{red}{82.04} & \textcolor{red}{78.33}       &  \textcolor{red}{79.39}        & \textcolor{red}{79.92}&  \textcolor{red}{80.42}  & \textcolor{blue}{81.93}   & \textcolor{red}{80.37}   &  \textcolor{red}{80.91}     &  \textcolor{blue}{86.97}     &  \textcolor{red}{62.91}     &\textcolor{red}{82.22}  &  \textcolor{red}{77.37}   \\
    \bottomrule
    \end{tabular}
    \caption{Classification accuracy (\%) of various methods on all benchmarks. The best and second-best results are highlighted in red and blue respectively. "-" means it is not applicable to the task.}
  \label{tab:overall}%
\end{table*}%

\subsection{Theoretical Analysis}
\label{ssec:theory}
The design of our method is theoretically grounded in multi-task learning principles~\cite{maurer2016benefit}, which demonstrates that jointly learning related tasks can yield superior generalization over isolated learning. Our collaborative assistance mechanism enables intelligent knowledge fusion while mitigating negative transfer, and can be formally justified by:
\begin{theorem}[Generalization Bound]
\label{thm:gene_bound}
Let $\mathcal{F}$ be the hypothesis space defined by the CAMEL architecture, for any hypothesis $h \in \mathcal{F}$ trained on streams $\mathcal{S}=\{S_i\}_{i=1}^n$, the expected risk $\mathcal{R}_i(h)$ on any stream $S_i$ is bounded as:
\begin{equation} 
\small
\label{eq:main_bound_simple}
    \mathcal{R}_i(h) \leq \hat{\mathcal{R}}_{\text{avg}}(h) + C(\{\mathcal{S}_j\}_{j=1}^n) + \mathcal{O}\left(\sqrt{\frac{ \log(n|W_{t}|)}{n|W_{t}|}}\right)
\end{equation}
where $\hat{\mathcal{R}}_{\text{avg}}(h) = \frac{1}{n}\sum_{j=1}^n \hat{\mathcal{R}}_j(h)$ is the average empirical risk across all streams and $C(\{\mathcal{S}_j\})$ quantifies the inter-stream dissimilarity. A proof sketch is in Appendix A.
\end{theorem}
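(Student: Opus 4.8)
The plan is to establish this as a multi-stream analogue of a classical domain-adaptation bound, combining a risk decomposition across streams with a uniform-convergence argument over the combined sample, in the spirit of the multi-task representation analysis of~\cite{maurer2016benefit}. The enabling structural fact is that, although the streams inhabit heterogeneous input and label spaces, every feature extractor $\texttt{FE}_j$ pushes its stream forward into the \emph{common} latent space $\mathcal{H}\subset\mathbb{R}^{D_h}$, so each risk $\mathcal{R}_j(h)$ can be written as the expectation of a uniformly bounded loss under the induced distribution of $(\boldsymbol{h}_{j,t},y_{j,t})$ on $\mathcal{H}\times\mathcal{Y}_j$. The inter-stream dissimilarity $C(\{\mathcal{S}_j\})$ will be defined on this shared space; given the MMD-based detector already used in Section~\ref{ssec:online_adaptation_drift}, the natural choice is a sum of pairwise RKHS distances $C(\{\mathcal{S}_j\})\propto\tfrac1n\sum_{j=1}^n \operatorname{MMD}\!\bigl(P_t^{(i)},P_t^{(j)}\bigr)$ between the latent push-forwards.

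First I would write the exact identity $\mathcal{R}_i(h)=\mathcal{R}_{\text{avg}}(h)+\tfrac1n\sum_{j=1}^n\bigl(\mathcal{R}_i(h)-\mathcal{R}_j(h)\bigr)$ with $\mathcal{R}_{\text{avg}}(h)=\tfrac1n\sum_{j=1}^n\mathcal{R}_j(h)$, and then bound each term $\mathcal{R}_i(h)-\mathcal{R}_j(h)$ by an integral-probability-metric distance between the latent distributions of $\mathcal{S}_i$ and $\mathcal{S}_j$. This is legitimate because, once $h$ is fixed, the per-stream loss is a bounded, Lipschitz test function on $\mathcal{H}$, so IPM/MMD duality applies; summing over $j$ yields $C(\{\mathcal{S}_j\})$. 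This step is exactly where the heterogeneity is absorbed: streams irrelevant to $\mathcal{S}_i$ contribute a large distance, but the routing network $\texttt{RN}_i$ can drive the assistance-expert weight $\boldsymbol{p}_{i,t}[\texttt{AE}_i]$ toward zero for them, so the \emph{effective} discrepancy realized by the trained $h$ is small — I would present this as the interpretive payoff rather than attempt a data-dependent sharpening of $C$.

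Second I would control the gap $\mathcal{R}_{\text{avg}}(h)-\hat{\mathcal{R}}_{\text{avg}}(h)$ by a standard uniform-convergence bound over the CAMEL hypothesis class $\mathcal{F}$, treating the window $W_t$ as $n|W_t|$ draws that are independent within the window from the current concepts $\{P_t^{(j)}\}_{j=1}^n$ (I would state this within-window i.i.d.\ assumption explicitly). McDiarmid's inequality together with a Rademacher-complexity (or covering-number) bound on $\mathcal{F}$ gives, with probability $1-\delta$, a deviation of order $\mathfrak{R}_{n|W_t|}(\mathcal{F})+\sqrt{\log(1/\delta)/(n|W_t|)}$; taking $\delta$ polynomially small in the sample size and bounding the complexity term by a constant yields the advertised $\mathcal{O}\!\bigl(\sqrt{\log(n|W_t|)/(n|W_t|)}\bigr)$ rate, which is roughly $\sqrt{n}$ times tighter than the single-stream rate $\sqrt{\log|W_t|/|W_t|}$ — the quantitative form of the multi-stream benefit. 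Combining the two steps and using $\hat{\mathcal{R}}_{\text{avg}}(h)=\tfrac1n\sum_j\hat{\mathcal{R}}_j(h)$ gives Eq.~\eqref{eq:main_bound_simple}.

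The main obstacle is making the cross-stream comparison in the first step rigorous given $\mathcal{Y}_i\neq\mathcal{Y}_j$ and $\mathcal{X}_i\neq\mathcal{X}_j$: one must pin down precisely in what sense "the same $h$" acts on two streams — it is really the shared attention-based fusion acting on a common latent representation, composed with stream-specific heads — and then choose the divergence so that $|\mathcal{R}_i(h)-\mathcal{R}_j(h)|$ is genuinely controlled by it. I expect the cleanest route is to absorb all label information into the loss, so that each $\mathcal{R}_j$ becomes an expectation over $\mathcal{H}$ of a uniformly bounded, Lipschitz function and the IPM bound applies verbatim; a secondary obstacle is that the analysis is conditional on the fixed architecture at window $t$ (the $\texttt{AET}$ may reconfigure it between windows) and on the within-window independence assumption, both of which I would record as standing hypotheses.
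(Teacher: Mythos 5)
Your overall skeleton coincides with the paper's: both proofs decompose $\mathcal{R}_i(h)$ into a dissimilarity term $\mathcal{R}_i(h)-\mathcal{R}_{\text{avg}}(h)$, a uniform-convergence term $\mathcal{R}_{\text{avg}}(h)-\hat{\mathcal{R}}_{\text{avg}}(h)$, and the empirical average, and your second step (McDiarmid plus Rademacher complexity over the pooled $n|W_t|$ samples, with the stated within-window i.i.d.\ assumption) is an acceptable substitute for the paper's VC-dimension bound on the mixture distribution $P_{\text{avg}}=\tfrac{1}{n}\sum_j P^{(j)}$; both yield the advertised $\mathcal{O}\bigl(\sqrt{\log(n|W_t|)/(n|W_t|)}\bigr)$ rate. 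The essential difference is in the first step. The paper \emph{defines} $C(\{\mathcal{S}_j\})=\sup_{h\in\mathcal{F}}\max_i\bigl|\mathcal{R}_i(h)-\tfrac{1}{n}\sum_j\mathcal{R}_j(h)\bigr|$, so the bound on the dissimilarity term is a tautology and the whole theorem reduces to the uniform-convergence step. You instead try to give $C$ operational content as an average of pairwise latent-space MMDs, which is more informative but is where your argument has a genuine gap.

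Concretely, the IPM/MMD duality step fails as stated: to bound $|\mathcal{R}_i(h)-\mathcal{R}_j(h)|$ by $\operatorname{MMD}(P^{(i)},P^{(j)})$ over the latent marginals, you need the \emph{same} bounded Lipschitz test function $\boldsymbol{h}\mapsto\mathbb{E}[\ell\mid\boldsymbol{h}]$ on both sides. Here the two risks are expectations of \emph{different} functions of the latent feature, because the classification heads $\texttt{CH}_i$ and $\texttt{CH}_j$ differ, the label spaces $\mathcal{Y}_i\neq\mathcal{Y}_j$ differ, and above all the conditional label distributions $P^{(i)}(y\mid\boldsymbol{h})$ and $P^{(j)}(y\mid\boldsymbol{h})$ differ. "Absorbing the labels into the loss" does not repair this; it merely hides the mismatch inside the test function. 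As in the standard domain-adaptation analysis, you would need either a discrepancy between \emph{joint} distributions on $\mathcal{H}\times\mathcal{Y}$ (which is not what the MMD detector of the paper measures) or an additional additive term quantifying the joint error of an ideal hypothesis, neither of which appears in your bound or in Eq.~\eqref{eq:main_bound_simple}. You correctly flagged this as "the main obstacle" but did not resolve it. The fix is either to adopt the paper's definitional $C$ (at the cost of $C$ being a non-computable worst-case quantity) or to restate the theorem with an explicit joint-discrepancy or ideal-joint-error term. One further minor point: taking $\delta$ polynomially small and "bounding the complexity term by a constant" does not by itself produce a vanishing bound; you need $\mathfrak{R}_{n|W_t|}(\mathcal{F})=\mathcal{O}\bigl(\sqrt{d/(n|W_t|)}\bigr)$ or similar, which is what the paper's VC bound supplies.
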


\begin{implications}
Theorem~\ref{thm:gene_bound} formally justifies \mymethod's architecture: The assistance expert ($\texttt{AE}_i$) minimizes $C(\{\mathcal{S}_i\}_{i=1}^n)$ through attention-based knowledge transfer, while the routing network ($\texttt{RN}_i$) dynamically balances this against stream-specific private experts ($\texttt{PE}_i$) to prevent negative transfer when dissimilarity is high. This intrinsic collaboration-specialization tradeoff combined with joint training's sample efficiency ($\mathcal{O}(1/\sqrt{n|W_{t}|})$) explains the empirical robustness. The autonomous expert tuner (AET) maintains adaptability to concept drift across windows through expert-level plasticity.
\end{implications}

\section{Experiments}
\label{sec:experiments}
In experiments, we first assess the framework's \textit{generality and robustness} across both homogeneous and heterogeneous settings. Second, we provide a qualitative analysis of the \textit{online adaptation process} visualizing how the $\texttt{AET}$ dynamically manages the private expert pool to concept drifts. Finally, we perform an ablation study to dissect the contribution of each core component, thereby validating our fundamental design principles. More detailed analysis and supplementary experiments can be seen in Appendix C.

\subsection{Experiment Settings}
\subsubsection{Benchmarks.}
We establish eight diverse multistream scenarios. The first four scenarios are constructed from twelve synthetic data streams, meticulously designed to isolate specific challenges: homogeneous (Set 1 \& 2) and heterogeneous (Set 3) feature spaces, and heterogeneous label spaces (Set 4). In addition, we employ four real-world multistream datasets, which inherently exhibit a mix of homogeneous and heterogeneous characteristics (Set 5-8). More detailed descriptions can be found in Appendix B.1.

\subsubsection{Baselines.}
We conduct a comparison against six SOTA methods, including \emph{1) Single-stream learning}: SRP~\cite{gomes2019streaming}, AMF~\cite{mourtada2021amf} and IWE~\cite{jiao2022incremental}; \emph{2) Multistream classification}: MCMO~\cite{jiao2023reduced}, OBAL~\cite{yu2024online} and BFSRL~\cite{yu2024fuzzy}. The detailed description and implementation are provided in Appendix B.2 \& B.3.
\subsection{Results Analysis}

\begin{table*}[thbp]
  \centering
  \small
  \setlength{\tabcolsep}{3pt}
    \begin{tabular}{cccc>{\bfseries}cccc>{\bfseries}cccc>{\bfseries}cccc>{\bfseries}c}
    \toprule
    \multirow{2}[4]{*}{\textbf{Variants}} & \multicolumn{4}{c}{Set 3} & \multicolumn{4}{c}{Set 4 } & \multicolumn{4}{c}{Set 6: Weather} & \multicolumn{4}{c}{Set 7: Credit Card} \\
    \cmidrule(lr){2-5}  \cmidrule(lr){6-9}  \cmidrule(lr){10-13}  \cmidrule(lr){14-17}       
     & SEAa  & RTG   & RBF & avg  & LED & LEDDri & Wave &avg &$\mathcal{S}_1$  &$\mathcal{S}_2$  & $\mathcal{S}_3$ & avg & $\mathcal{S}_1$  & $\mathcal{S}_2$  & $\mathcal{S}_3$ & avg   \\
    \midrule
    Base  &  80.28      & 64.37     &  81.42  & 75.36    & 29.21       &  21.96     & 76.94    & 42.70 & 74.32     & 73.17     &  73.21 &  73.57   & 74.29       &  77.24    & 74.02 & 75.18\\
    Base+I &  83.32     &  66.10     & 87.27   &  78.90  & \textcolor{blue}{37.31}      &  34.13     &  \textcolor{blue}{83.22}    & \textcolor{blue}{51.55} &  76.22     &  77.04      &   76.78   & 76.68 & 77.92    & 76.58      & 76.74 & 77.08 \\
    Base+I+DP &  \textcolor{blue}{84.84}     & \textcolor{blue}{66.17}     & \textcolor{blue}{89.16}   & \textcolor{blue}{80.06}  & 37.23       & \textcolor{blue}{34.30}     &  82.97    & 51.50  & \textcolor{blue}{79.74}      &  \textcolor{blue}{77.67}    & \textcolor{blue}{78.01}  &  \textcolor{blue}{78.47}   & \textcolor{blue}{78.63}      &  \textcolor{red}{82.07}      & \textcolor{blue}{79.09}  & 79.93\\
   \mymethod & \textcolor{red}{85.14}      &\textcolor{red}{67.73}     &\textcolor{red}{92.75}   &\textcolor{red}{81.87}  &\textcolor{red}{38.19}     &\textcolor{red}{35.36}       &\textcolor{red}{85.43}  &\textcolor{red}{53.00}  & 82.04 & 78.33       &  \textcolor{blue}{79.39}        & \textcolor{red}{79.92}   & \textcolor{red}{80.42}  & \textcolor{blue}{81.93}   & \textcolor{red}{80.37}   &  \textcolor{red}{80.91}        \\
    \bottomrule
    \end{tabular}
    \caption{Ablation study. Classification accuracy (\%) of \mymethod's variants. The best and second-best results are highlighted in red and blue, respectively.}
  \label{tab:ablation}%
\end{table*}%

\subsubsection{Overall Performance.}
Table~\ref{tab:overall} demonstrates that \mymethod~consistently achieves SOTA average accuracy across almost all scenarios except for Set 1, validating its strong generality and robustness. The framework's primary strength lies in its effective handling of the \textit{Intrinsic Heterogeneity (Challenge 1)}. Unlike contemporary multistream methods (MCMO, OBAL, BFSRL) which are confined to homogeneous settings and thus not applicable to our more realistic heterogeneous scenarios, our method thrives in these complex environments. This is enabled by its stream-specific modules ($\texttt{FE}_i$, $\texttt{CH}_i$), which provide the necessary specialization for each stream. Furthermore, compared against single-stream methods (SRP, AMF, IWE), \mymethod's consistent top-tier performance validates its novel approach to the \textit{Knowledge Fusion (Challenge 2)}. While single-stream methods operate in isolation, our collaborative assistance mechanism successfully leverages latent inter-stream correlations. The attention-based experts perform targeted knowledge transfer, boosting the overall system performance. This dynamic interplay between specialized private experts managed by the $\texttt{AET}_i$ to address \textit{Asynchronous Drifts (Challenge 3)}, and the collaborative assistance experts allows it to strike a robust balance between focused learning and knowledge fusion. Consequently, our method excels across the full spectrum of HML challenges, proving its capability as a general and powerful solution for diverse and evolving multistream environments.

\subsubsection{Online Performance.}
Figure~\ref{fig:online} qualitatively analyzes \mymethod's online adaptation, plotting per-stream accuracy against the number of private experts. The results illustrate the 'drift-diagnose-adapt' narrative and validate the Autonomous Expert Tuner ($\texttt{AET}$). For example, in Figure~\ref{fig:online_sea}, Stream 1 exhibits an accuracy dip at window 15, indicating concept drift. The $\texttt{AET}$ correctly diagnoses this and responds by instantiating a new private expert, increasing model capacity and enabling swift performance recovery. Once the new concept is learned, the redundant expert is pruned (around window 20) to maintain model parsimony. Conversely, Stream 2 (without significant drift) demonstrates $\texttt{AET}$'s robustness: despite accuracy fluctuations, the private expert count remains constant, showing it avoids overreacting to inherent data noise (similar to Figure~\ref{fig:online_led}). These behaviors highlight that CAMEL's adaptation is highly selective, providing architectural plasticity precisely when and where needed to autonomously maintain high performance amidst asynchronous concept drifts. Additional visualizations are in Appendix C.1.

\begin{figure}[ht]
    \centering
    \begin{subfigure}[t]{\columnwidth}
        \centering
      \includegraphics[width=\linewidth]{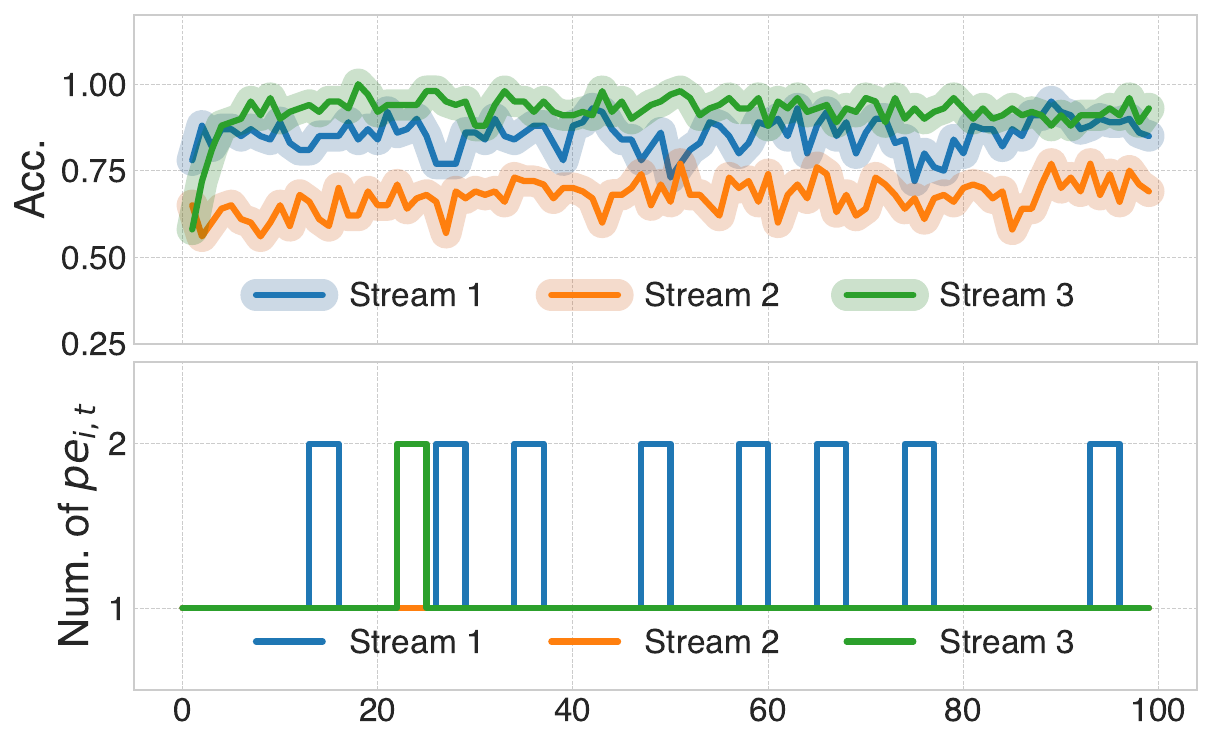}
        \caption{Set 3: SEAa, RTG, RBF.}
        \label{fig:online_sea}
    \end{subfigure}
    \begin{subfigure}[t]{\columnwidth}
        \centering
        \includegraphics[width=\linewidth]{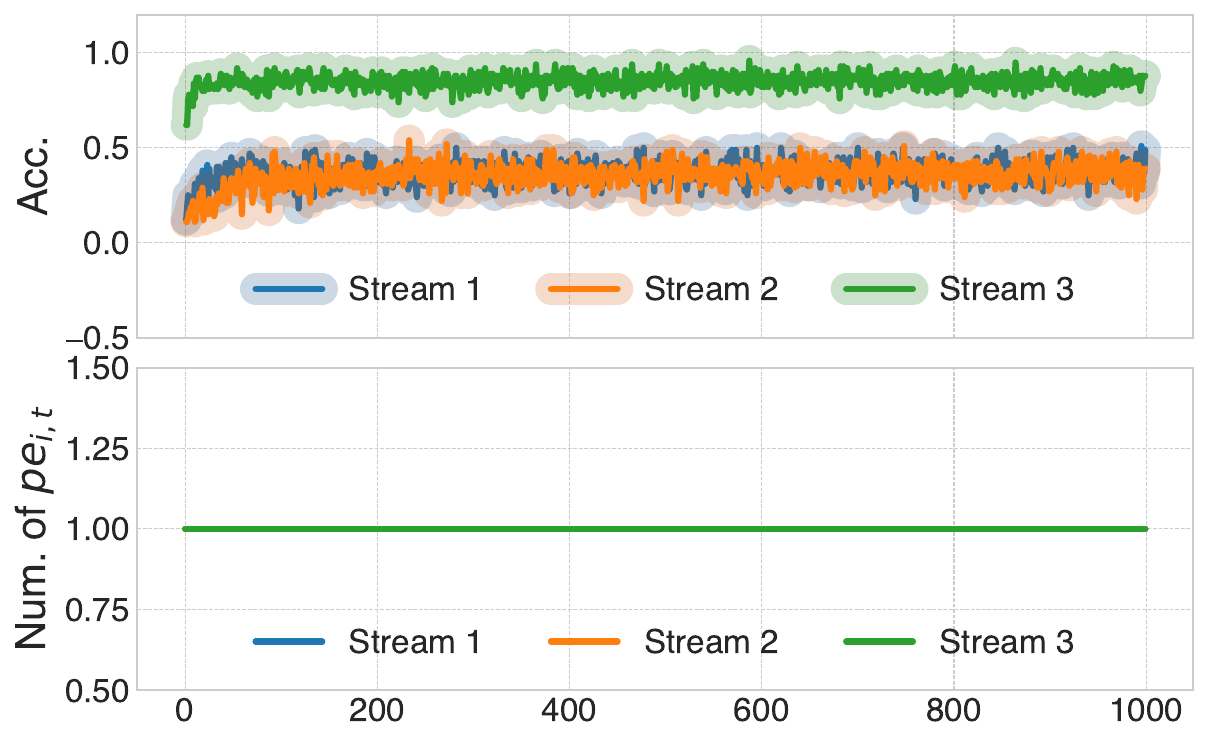}
        \caption{Set 4: LED, LEDDrift, Waveform}
        \label{fig:online_led}
    \end{subfigure}
    \caption{Online accuracy and the corresponding number of private experts over time.}
    \label{fig:online}
\end{figure}

\subsubsection{Ablation Study.}
To dissect component contributions, our ablation study progressively constructs the full \mymethod~framework (Table~\ref{tab:ablation}) validating core design principles. Transitioning from the naive \textbf{Base} (full retraining) to \textbf{Base+I} (incremental learning) yields significant gains, confirming continuous fine-tuning mitigates catastrophic forgetting. Integrating the Autonomous Expert Tuner (\textbf{Base+I+DP}) further improves performance on drifting streams (e.g., RBF: +1.89\%), demonstrating expert-level plasticity effectively addresses \textit{Asynchronous Drifts (Challenge 3)}. The full \mymethod~framework with collaborative assistance delivers the most substantial improvement, which empirically shows the attention-based mechanism masters \textit{Knowledge Fusion (Challenge 2)} by distilling cross-stream knowledge for superior HML generalization.

\section{Conclusion \& Limitation}
In this paper, we introduced \mymethod, a novel autonomous Mixture of Experts framework designed to robustly handle the complexities of multistream learning. By assigning each stream a dynamic ensemble of specialized private experts alongside a dedicated collaborative assistance expert, our method effectively addresses intrinsic heterogeneity and facilitates adaptive knowledge fusion. In addition, an autonomous tuner manages the expert lifecycle at a modular level and allows our method to dynamically adapt to concept drifts. A generalization bound based on multi-task learning theory formally connects inter-stream relatedness and routing decisions with task-level risk. Empirical results on diverse synthetic and real-world multistream settings demonstrate the superiority under HML challenges.

Limitations include suboptimal handling of recurring concepts through expert freezing and computational overhead from dynamic architecture adaptation. Future work will explore expert reactivation strategies and efficiency optimizations for resource-constrained environments.

\section{Acknowledgments}
The work was supported by the Australian Research Council (ARC) under Laureate project FL190100149 and discovery project DP220102635.
\bibliography{aaai2026}

\begin{thebibliography}{46}
\providecommand{\natexlab}[1]{#1}

\bibitem[{Agrahari and Singh(2022)}]{agrahari2022concept}
Agrahari, S.; and Singh, A.~K. 2022.
\newblock Concept drift detection in data stream mining: A literature review.
\newblock \emph{Journal of King Saud University-Computer and Information
  Sciences}, 34(10): 9523--9540.

\bibitem[{Bifet and Gavalda(2007)}]{bifet2007learning}
Bifet, A.; and Gavalda, R. 2007.
\newblock Learning from time-changing data with adaptive windowing.
\newblock In \emph{Proceedings of the 2007 SIAM international conference on
  data mining}, 443--448. SIAM.

\bibitem[{Brzezinski and Stefanowski(2013)}]{brzezinski2013reacting}
Brzezinski, D.; and Stefanowski, J. 2013.
\newblock Reacting to different types of concept drift: The accuracy updated
  ensemble algorithm.
\newblock \emph{IEEE transactions on neural networks and learning systems},
  25(1): 81--94.

\bibitem[{Cacciarelli and Kulahci(2024)}]{cacciarelli2024active}
Cacciarelli, D.; and Kulahci, M. 2024.
\newblock Active learning for data streams: a survey.
\newblock \emph{Machine Learning}, 113(1): 185--239.

\bibitem[{Ditzler and Polikar(2012)}]{ditzler2012incremental}
Ditzler, G.; and Polikar, R. 2012.
\newblock Incremental learning of concept drift from streaming imbalanced data.
\newblock \emph{IEEE transactions on knowledge and data engineering}, 25(10):
  2283--2301.

\bibitem[{Domingos and Hulten(2000)}]{Domingos2000}
Domingos, P.; and Hulten, G. 2000.
\newblock Mining high-speed data streams.
\newblock In \emph{Proceedings of the 6th ACM SIGKDD International Conference
  on Knowledge Discovery and Data Mining}, 71--80. New York, USA: ACM.

\bibitem[{Gomes, Read, and Bifet(2019)}]{gomes2019streaming}
Gomes, H.~M.; Read, J.; and Bifet, A. 2019.
\newblock Streaming random patches for evolving data stream classification.
\newblock In \emph{2019 IEEE International Conference on Data Mining},
  240--249. IEEE.

\bibitem[{Guo, Zhang, and Wang(2021)}]{guo2021selective}
Guo, H.; Zhang, S.; and Wang, W. 2021.
\newblock Selective ensemble-based online adaptive deep neural networks for
  streaming data with concept drift.
\newblock \emph{Neural Networks}, 142: 437--456.

\bibitem[{Jiao et~al.(2024)Jiao, Guo, Yang, Pu, Zheng, and
  Gong}]{jiao2022incremental}
Jiao, B.; Guo, Y.; Yang, C.; Pu, J.; Zheng, Z.; and Gong, D. 2024.
\newblock Incremental Weighted Ensemble for Data Streams with Concept Drift.
\newblock \emph{IEEE Transactions on Artificial Intelligence}, 5(01): 92--103.

\bibitem[{Jiao et~al.(2023)Jiao, Guo, Yang, Pu, and Gong}]{jiao2023reduced}
Jiao, B.; Guo, Y.; Yang, S.; Pu, J.; and Gong, D. 2023.
\newblock Reduced-Space Multistream Classification Based on Multiobjective
  Evolutionary Optimization.
\newblock \emph{IEEE Transactions on Evolutionary Computation}, 27(4):
  764--777.

\bibitem[{Kim, Hwang, and Whang(2024)}]{kim2024quilt}
Kim, M.; Hwang, S.-H.; and Whang, S.~E. 2024.
\newblock Quilt: robust data segment selection against concept drifts.
\newblock In \emph{Proceedings of the AAAI Conference on Artificial
  Intelligence}, volume~38, 21249--21257.

\bibitem[{Korycki and Krawczyk(2021)}]{korycki2021concept}
Korycki, {\L}.; and Krawczyk, B. 2021.
\newblock Concept drift detection from multi-class imbalanced data streams.
\newblock In \emph{2021 IEEE 37th International Conference on Data Engineering
  (ICDE)}, 1068--1079. IEEE.

\bibitem[{Lei et~al.(2024)Lei, Chen, Wang, Jiang, and Zou}]{lei2024adapted}
Lei, T.; Chen, S.; Wang, B.; Jiang, Z.; and Zou, N. 2024.
\newblock Adapted-moe: Mixture of experts with test-time adaption for anomaly
  detection.
\newblock \emph{arXiv preprint arXiv:2409.05611}.

\bibitem[{Li et~al.()Li, Lin, Duan, Liang, and Shroff}]{litheory}
Li, H.; Lin, S.; Duan, L.; Liang, Y.; and Shroff, N. ????
\newblock Theory on Mixture-of-Experts in Continual Learning.
\newblock In \emph{The Thirteenth International Conference on Learning
  Representations}.

\bibitem[{Li et~al.(2022)Li, Yang, Liu, Xia, and Bian}]{li2022ddg}
Li, W.; Yang, X.; Liu, W.; Xia, Y.; and Bian, J. 2022.
\newblock Ddg-da: Data distribution generation for predictable concept drift
  adaptation.
\newblock In \emph{Proceedings of the AAAI Conference on Artificial
  Intelligence}, volume~36, 4092--4100.

\bibitem[{Liu, Lu, and Zhang(2020)}]{liu2020diverse}
Liu, A.; Lu, J.; and Zhang, G. 2020.
\newblock Diverse instance-weighting ensemble based on region drift
  disagreement for concept drift adaptation.
\newblock \emph{IEEE transactions on neural networks and learning systems},
  32(1): 293--307.

\bibitem[{Lu et~al.(2018)Lu, Liu, Dong, Gu, Gama, and Zhang}]{lu2018learning}
Lu, J.; Liu, A.; Dong, F.; Gu, F.; Gama, J.; and Zhang, G. 2018.
\newblock Learning under concept drift: A review.
\newblock \emph{IEEE transactions on knowledge and data engineering}, 31(12):
  2346--2363.

\bibitem[{Lu et~al.(2025)Lu, Lu, Liu, and Zhang}]{lu2025early}
Lu, P.; Lu, J.; Liu, A.; and Zhang, G. 2025.
\newblock Early Concept Drift Detection via Prediction Uncertainty.
\newblock In \emph{Proceedings of the AAAI Conference on Artificial
  Intelligence}, volume~39, 19124--19132.

\bibitem[{Ma et~al.(2024)Ma, Lu, Fang, Liu, and Zhang}]{ma2024multiview}
Ma, G.; Lu, J.; Fang, Z.; Liu, F.; and Zhang, G. 2024.
\newblock Multiview classification through learning from interval-valued data.
\newblock \emph{IEEE Transactions on Neural Networks and Learning Systems}.

\bibitem[{Marcu and Bouvry(2024)}]{marcu2024big}
Marcu, O.-C.; and Bouvry, P. 2024.
\newblock \emph{Big data stream processing}.
\newblock Ph.D. thesis, University of Luxembourg.

\bibitem[{Maurer, Pontil, and Romera-Paredes(2016)}]{maurer2016benefit}
Maurer, A.; Pontil, M.; and Romera-Paredes, B. 2016.
\newblock The benefit of multitask representation learning.
\newblock \emph{Journal of Machine Learning Research}, 17(81): 1--32.

\bibitem[{Montiel et~al.(2021)Montiel, Halford, Mastelini, Bolmier, Sourty,
  Vaysse, Zouitine, Gomes, Read, Abdessalem et~al.}]{montiel2021river}
Montiel, J.; Halford, M.; Mastelini, S.~M.; Bolmier, G.; Sourty, R.; Vaysse,
  R.; Zouitine, A.; Gomes, H.~M.; Read, J.; Abdessalem, T.; et~al. 2021.
\newblock River: machine learning for streaming data in python.
\newblock \emph{The Journal of Machine Learning Research}, 22(1): 4945--4952.

\bibitem[{Mourtada, Ga{\"\i}ffas, and Scornet(2021)}]{mourtada2021amf}
Mourtada, J.; Ga{\"\i}ffas, S.; and Scornet, E. 2021.
\newblock {AMF}: Aggregated Mondrian forests for online learning.
\newblock \emph{Journal of the Royal Statistical Society Series B: Statistical
  Methodology}, 83(3): 505--533.

\bibitem[{Mu and Lin(2025)}]{mu2025comprehensive}
Mu, S.; and Lin, S. 2025.
\newblock A comprehensive survey of mixture-of-experts: Algorithms, theory, and
  applications.
\newblock \emph{arXiv preprint arXiv:2503.07137}.

\bibitem[{Panchal et~al.(2023)Panchal, Choudhary, Mitra, Mukherjee, Sarkhel,
  Mitra, and Guan}]{panchal2023flash}
Panchal, K.; Choudhary, S.; Mitra, S.; Mukherjee, K.; Sarkhel, S.; Mitra, S.;
  and Guan, H. 2023.
\newblock Flash: Concept drift adaptation in federated learning.
\newblock In \emph{International Conference on Machine Learning}, 26931--26962.
  PMLR.

\bibitem[{Qin et~al.(2020)Qin, Cheng, Zhao, Chen, Metzler, and
  Qin}]{qin2020multitask}
Qin, Z.; Cheng, Y.; Zhao, Z.; Chen, Z.; Metzler, D.; and Qin, J. 2020.
\newblock Multitask mixture of sequential experts for user activity streams.
\newblock In \emph{Proceedings of the 26th ACM SIGKDD international conference
  on knowledge discovery \& data mining}, 3083--3091.

\bibitem[{Read and Zliobaite(2025)}]{read2025supervised}
Read, J.; and Zliobaite, I. 2025.
\newblock Supervised Learning from Data Streams: An Overview and Update.
\newblock \emph{ACM Computing Surveys}.

\bibitem[{Sarkar et~al.(2023)Sarkar, Liang, Fan, Wang, and
  Hao}]{sarkar2023edge}
Sarkar, R.; Liang, H.; Fan, Z.; Wang, Z.; and Hao, C. 2023.
\newblock Edge-moe: Memory-efficient multi-task vision transformer architecture
  with task-level sparsity via mixture-of-experts.
\newblock In \emph{2023 IEEE/ACM International Conference on Computer Aided
  Design (ICCAD)}, 01--09. IEEE.

\bibitem[{Smola et~al.(2007)Smola, Gretton, Song, and
  Sch{\"o}lkopf}]{smola2007hilbert}
Smola, A.; Gretton, A.; Song, L.; and Sch{\"o}lkopf, B. 2007.
\newblock A Hilbert space embedding for distributions.
\newblock In \emph{International conference on algorithmic learning theory},
  13--31. Springer.

\bibitem[{Song et~al.(2021)Song, Lu, Liu, Lu, and Zhang}]{song2021segment}
Song, Y.; Lu, J.; Liu, A.; Lu, H.; and Zhang, G. 2021.
\newblock A segment-based drift adaptation method for data streams.
\newblock \emph{IEEE transactions on neural networks and learning systems},
  33(9): 4876--4889.

\bibitem[{Street and Kim(2001)}]{Street2001}
Street, W.~N.; and Kim, Y. 2001.
\newblock A streaming ensemble algorithm ({SEA}) for large-scale
  classification.
\newblock In \emph{Proceedings of the 7th ACM SIGKDD International Conference
  on Knowledge Discovery and Data Mining}, 377--382. New York, USA: ACM.

\bibitem[{Tran, Pham et~al.(2025)}]{tran2025revisiting}
Tran, V.-T.; Pham, Q.-V.; et~al. 2025.
\newblock Revisiting Sparse Mixture of Experts for Resource-adaptive Federated
  Fine-tuning Foundation Models.
\newblock In \emph{ICLR 2025 Workshop on Modularity for Collaborative,
  Decentralized, and Continual Deep Learning}.

\bibitem[{Vaswani et~al.(2017)Vaswani, Shazeer, Parmar, Uszkoreit, Jones,
  Gomez, Kaiser, and Polosukhin}]{vaswani2017attention}
Vaswani, A.; Shazeer, N.; Parmar, N.; Uszkoreit, J.; Jones, L.; Gomez, A.~N.;
  Kaiser, {\L}.; and Polosukhin, I. 2017.
\newblock Attention is all you need.
\newblock \emph{Advances in neural information processing systems}, 30.

\bibitem[{Vyas et~al.(2014)Vyas, Kannao, Bhargava, and
  Guha}]{vyas2014commercial}
Vyas, A.; Kannao, R.; Bhargava, V.; and Guha, P. 2014.
\newblock Commercial block detection in broadcast news videos.
\newblock In \emph{Proceedings of the 2014 Indian Conference on Computer Vision
  Graphics and Image Processing}, 1--7.

\bibitem[{Wan, Liang, and Yoon(2024)}]{wan2024online}
Wan, K.; Liang, Y.; and Yoon, S. 2024.
\newblock Online drift detection with maximum concept discrepancy.
\newblock In \emph{Proceedings of the 30th ACM SIGKDD Conference on Knowledge
  Discovery and Data Mining}, 2924--2935.

\bibitem[{Wang et~al.(2024)Wang, Lu, Liu, and Zhang}]{wang2024adaptive}
Wang, K.; Lu, J.; Liu, A.; and Zhang, G. 2024.
\newblock An Adaptive Stacking Method for Multiple Data Streams Learning under
  Concept Drift.
\newblock In \emph{Intelligent Management of Data and Information in Decision
  Making: Proceedings of the 16th FLINS Conference on Computational
  Intelligence in Decision and Control \& the 19th ISKE Conference on
  Intelligence Systems and Knowledge Engineering (FLINS-ISKE 2024)}, 267--274.
  World Scientific.

\bibitem[{Wang et~al.(2021)Wang, Lu, Liu, Zhang, and Xiong}]{wang2021evolving}
Wang, K.; Lu, J.; Liu, A.; Zhang, G.; and Xiong, L. 2021.
\newblock Evolving gradient boost: A pruning scheme based on loss improvement
  ratio for learning under concept drift.
\newblock \emph{IEEE Transactions on Cybernetics}, 53(4): 2110--2123.

\bibitem[{Wen et~al.(2023)Wen, Chen, Sun, Zhang, Wang, Jin, Tan
  et~al.}]{wen2023onenet}
Wen, Q.; Chen, W.; Sun, L.; Zhang, Z.; Wang, L.; Jin, R.; Tan, T.; et~al. 2023.
\newblock Onenet: Enhancing time series forecasting models under concept drift
  by online ensembling.
\newblock \emph{Advances in Neural Information Processing Systems}, 36:
  69949--69980.

\bibitem[{Xiang et~al.(2023)Xiang, Zi, Cong, and Wang}]{xiang2023concept}
Xiang, Q.; Zi, L.; Cong, X.; and Wang, Y. 2023.
\newblock Concept drift adaptation methods under the deep learning framework: A
  literature review.
\newblock \emph{Applied Sciences}, 13(11): 6515.

\bibitem[{Xu, Chen, and Wang(2025{\natexlab{a}})}]{xu2025coral}
Xu, K.; Chen, L.; and Wang, S. 2025{\natexlab{a}}.
\newblock Coral: Concept drift representation learning for co-evolving
  time-series.
\newblock \emph{arXiv preprint arXiv:2501.01480}.

\bibitem[{Xu, Chen, and Wang(2025{\natexlab{b}})}]{xu2025drift2matrix}
Xu, K.; Chen, L.; and Wang, S. 2025{\natexlab{b}}.
\newblock Drift2matrix: Kernel-induced self representation for concept drift
  adaptation in co-evolving time series.

\bibitem[{Yang, Lu, and Yu(2025)}]{yang2025adapting}
Yang, X.; Lu, J.; and Yu, E. 2025.
\newblock Adapting Multi-modal Large Language Model to Concept Drift From
  Pre-training Onwards.
\newblock In \emph{The Thirteenth International Conference on Learning
  Representations}.

\bibitem[{Yu et~al.(2025)Yu, Lu, Yang, Zhang, and Fang}]{yu2025learning}
Yu, E.; Lu, J.; Yang, X.; Zhang, G.; and Fang, Z. 2025.
\newblock Learning Robust Spectral Dynamics for Temporal Domain Generalization.
\newblock \emph{arXiv preprint arXiv:2505.12585}.

\bibitem[{Yu et~al.(2024)Yu, Lu, Zhang, and Zhang}]{yu2024online}
Yu, E.; Lu, J.; Zhang, B.; and Zhang, G. 2024.
\newblock Online boosting adaptive learning under concept drift for multistream
  classification.
\newblock In \emph{Proceedings of the AAAI Conference on Artificial
  Intelligence}, volume~38, 16522--16530.

\bibitem[{Yu, Lu, and Zhang(2024)}]{yu2024fuzzy}
Yu, E.; Lu, J.; and Zhang, G. 2024.
\newblock Fuzzy Shared Representation Learning for Multistream Classification.
\newblock \emph{IEEE Transactions on Fuzzy Systems}, 32(10): 5625--5637.

\bibitem[{Zhang et~al.(2025)Zhang, Yu, Shao, and Sun}]{zhang2025multimodal}
Zhang, T.; Yu, E.; Shao, Y.; and Sun, J. 2025.
\newblock Multimodal Inverse Attention Network with Intrinsic Discriminant
  Feature Exploitation for Fake News Detection.
\newblock \emph{arXiv preprint arXiv:2502.01699}.

\end{thebibliography}
\clearpage
\newpage

\appendix
\section*{Appendix}
\section{A. Theoretical Analysis}
In this section, we provide a formal theoretical analysis to ground the design of our CAMEL framework. We leverage the well-established theory of generalization bounds for multi-task learning (MTL)~\cite{maurer2016benefit} to rationalize the core components of our architecture. The analysis demonstrates how CAMEL's design inherently balances knowledge transfer and task-specific specialization to achieve robust performance in the HML setting. To better understand our theoretical analysis, we give some definitions:
\begin{definition}[Window-wise Risk]
For a hypothesis $h \in \mathcal{F}$ and a stream $S_i$ in window $W_t$, the true risk is the expected loss $\mathcal{R}_i(h) = \mathbb{E}_{(x,y) \sim P^{(i)}}[\ell(h(x), y)]$. The empirical risk on the data window $W_{i,t}$ is $\hat{\mathcal{R}}_i(h) = \frac{1}{|W_{i,t}|} \sum_{(x_k, y_k) \in W_{i,t}} \ell(h(x_k), y_k)$.
\end{definition}

\begin{definition}[Inter-Stream Dissimilarity]
The dissimilarity between streams in $\mathcal{S} = \{S_i\}_{i=1}^n$ is defined as the maximum deviation between any single stream's risk and the average risk across all streams, measured over the entire hypothesis space $\mathcal{F}$.
\begin{equation}
    C(\{\mathcal{S}_j\}) = \sup_{h \in \mathcal{F}} \max_{i} \left| \mathcal{R}_i(h) - \frac{1}{n}\sum_{j=1}^n \mathcal{R}_j(h) \right|
\end{equation}
This metric quantifies the heterogeneity of the learning tasks. A small $C(\{\mathcal{S}_j\})$ implies that the streams represent closely related tasks, whereas a large value indicates significant task divergence.
\end{definition}


\subsection{Generalization Bound}

We restate the main generalization bound for CAMEL, which connects the performance on a single stream to the average performance across all streams and their dissimilarity.
\begin{theorem}[Restatement of  Theorem~\ref{thm:gene_bound}]
\label{thm:main_bound}
Let $h$ be a hypothesis learned by CAMEL on data from window $W_t$. Assume all windows $|W_{i,t}|$ are equal to $|W_t|$. Then, for any stream $S_i$, with probability at least $1-\delta$ over the random draw of the training samples, the following bound holds for all $h \in \mathcal{F}$:
\begin{equation}
\begin{aligned}
       \mathcal{R}_i(h) & \leq  \hat{\mathcal{R}}_{\text{avg}}(h) + C(\{\mathcal{S}_i\}_{i=1}^n) \\
       & + \sqrt{\frac{d(\log(2n|W_t|/d)+1) - \log(\delta/4)}{2n|W_t|}} 
\end{aligned}
\end{equation}
where $\hat{\mathcal{R}}_{\text{avg}}(h) = \frac{1}{n} \sum_{j=1}^n \hat{\mathcal{R}}_j(h)$ is the average empirical risk. And $C(\{\mathcal{S}_i\}_{i=1}^n)$ quantifies the inter-stream dissimilarity. 
\end{theorem}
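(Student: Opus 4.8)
The plan is to derive the bound by composing two classical ingredients: a standard VC-type uniform convergence bound applied to the \emph{pooled} dataset of all $n$ streams, and the triangle inequality that relates a single stream's true risk to the average true risk via the dissimilarity term $C(\{\mathcal{S}_j\})$. First I would treat the concatenation of the $n$ windows $\{W_{i,t}\}_{i=1}^n$ as a single sample of size $n|W_t|$ drawn (per stream) from the respective $P^{(i)}$, and define the averaged empirical and true risks $\hat{\mathcal{R}}_{\text{avg}}(h) = \frac{1}{n}\sum_j \hat{\mathcal{R}}_j(h)$ and $\mathcal{R}_{\text{avg}}(h) = \frac{1}{n}\sum_j \mathcal{R}_j(h)$. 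A uniform convergence argument over the hypothesis class $\mathcal{F}$ (with VC dimension $d$, loss bounded in $[0,1]$) gives, with probability at least $1-\delta$, that $\sup_{h\in\mathcal{F}} \big(\mathcal{R}_{\text{avg}}(h) - \hat{\mathcal{R}}_{\text{avg}}(h)\big)$ is at most the $\mathcal{O}\!\big(\sqrt{d\log(n|W_t|/d)/(n|W_t|)}\big)$ term appearing in the statement; this is where the $\sqrt{\log(n|W_t|)/(n|W_t|)}$ rate and the explicit $\sqrt{(d(\log(2n|W_t|/d)+1)-\log(\delta/4))/(2n|W_t|)}$ constant come from (standard Vapnik-Chervonenkis / Sauer-Shelah bound with a union bound over the $n$ risk deviations folded into the $\delta/4$).

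Next I would insert the dissimilarity term. By the definition of $C(\{\mathcal{S}_j\})$ as $\sup_{h\in\mathcal{F}}\max_i |\mathcal{R}_i(h) - \frac{1}{n}\sum_j \mathcal{R}_j(h)|$, we immediately have for every $h\in\mathcal{F}$ and the fixed target stream $S_i$ that $\mathcal{R}_i(h) \le \mathcal{R}_{\text{avg}}(h) + C(\{\mathcal{S}_j\})$. Chaining this with the uniform convergence bound $\mathcal{R}_{\text{avg}}(h) \le \hat{\mathcal{R}}_{\text{avg}}(h) + \sqrt{(d(\log(2n|W_t|/d)+1)-\log(\delta/4))/(2n|W_t|)}$ yields exactly the claimed inequality, holding simultaneously for all $h\in\mathcal{F}$ on the same high-probability event. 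I would then remark that the bound interacts with the architecture as in the Implication paragraph: the assistance expert $\texttt{AE}_i$ effectively shrinks the relevant $C(\{\mathcal{S}_j\})$ by aligning representations across streams, while the routing network $\texttt{RN}_i$ caps the damage when $C$ is large by down-weighting $\texttt{AE}_i$ in favor of private experts.

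The main obstacle — and the reason this is only a \emph{sketch} — is that the i.i.d. sampling assumption underlying the VC bound is violated on two counts: instances within a window are temporally ordered (not i.i.d.) and, more seriously, the joint distributions $P_t^{(i)}$ drift across windows, so the "sample" is not drawn from a fixed distribution. The honest fix is to restrict the claim to \emph{within-window} generalization, treating $P^{(i)} := P_t^{(i)}$ as fixed for the duration of $W_t$ and invoking the block/near-i.i.d. approximation that is standard in the prequential streaming literature; a fully rigorous treatment would replace the VC term with a bound for dependent data (e.g. via mixing coefficients) and would need to make the coupling between $\texttt{FE}_i$, $\texttt{AE}_i$ and the effective hypothesis class $\mathcal{F}$ precise, since $C(\{\mathcal{S}_j\})$ is itself a data-dependent, representation-dependent quantity rather than a fixed property of the raw streams. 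A secondary subtlety is the union bound over the $n$ per-stream deviations needed to make $\max_i$ inside $C$ compatible with the single high-probability event; I would absorb this into the $\log(\delta/4)$ constant, which is why the statement carries a $\delta/4$ rather than a $\delta/2$.
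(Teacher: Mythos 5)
Your proposal matches the paper's own argument essentially step for step: the same decomposition of $\mathcal{R}_i(h)$ into a dissimilarity term bounded by $C(\{\mathcal{S}_j\})$ via its definition, plus a standard VC-dimension uniform convergence bound on $\mathcal{R}_{\text{avg}}(h)-\hat{\mathcal{R}}_{\text{avg}}(h)$ over the pooled sample of size $n|W_t|$, combined with a union bound. Your added caveats about the i.i.d. assumption under drift and the data-dependence of $C$ are reasonable observations the paper does not address, but they do not change the fact that the core derivation is the same.
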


\begin{proof}
The proof follows the standard argument for multi-task learning bounds \cite{maurer2016benefit}. We begin by decomposing the risk of stream $S_i$:
\begin{equation}
\begin{aligned}
    \mathcal{R}_i(h) &= (\mathcal{R}_i(h) - \mathcal{R}_{\text{avg}}(h)) \\ 
    &+ (\mathcal{R}_{\text{avg}}(h) - \hat{\mathcal{R}}_{\text{avg}}(h)) + \hat{\mathcal{R}}_{\text{avg}}(h)
\end{aligned}
\end{equation}
where $\mathcal{R}_{\text{avg}}(h) = \frac{1}{n} \sum_{j=1}^n \mathcal{R}_j(h)$ is the average true risk.

We bound the first two terms on the right-hand side separately:

\noindent
\textbf{1) Bounding the first term (Dissimilarity):} By Definition 2, the first term is bounded by the inter-stream dissimilarity:
\begin{equation}
\small
    \mathcal{R}_i(h) - \mathcal{R}_{\text{avg}}(h) \leq \sup_{h' \in \mathcal{F}} (\mathcal{R}_i(h') - \mathcal{R}_{\text{avg}}(h')) \leq C(\{\mathcal{S}_j\})
\end{equation}

\noindent
\textbf{2) Bounding the second term (Generalization Error):} The second term is the generalization error of the average risk. We can apply a standard VC-dimension bound to the average hypothesis over a total of $n|W_t|$ samples drawn from the mixture distribution $P_{\text{avg}} = \frac{1}{n}\sum_j P^{(j)}$. With probability at least $1-\delta/2$, for all $h \in \mathcal{F}$:
\begin{equation}
\small
\mathcal{R}_{\text{avg}}(h) - \hat{\mathcal{R}}_{\text{avg}}(h) \leq \sqrt{\frac{d(\log(2n|W_t|/d)+1) - \log(\delta/4)}{2n|W_t|}}
\end{equation}

\noindent
\textbf{Combining the bounds:} Substituting the bounds for the first two terms back into the decomposition, and applying a union bound for the probabilities, we arrive at the final result stated in the theorem.
\end{proof}

\subsection{Theoretical Rationale for CAMEL's Architecture}

Theorem~\ref{thm:gene_bound} formally establishes the core trade-off in heterogeneous multistream learning: balancing the benefit of joint training (lower average empirical risk $\hat{\mathcal{R}}_{\text{avg}}$) against the penalty of task divergence $C(\{\mathcal{S}_j\})$. The architecture of CAMEL is a direct embodiment of this principle, with each component designed to optimize this trade-off.

\paragraph{Assistance Expert (\texttt{AE}) as a Dissimilarity Minimizer.}
The dissimilarity term $C(\{\mathcal{S}_j\})$, while defined on unobservable true risks, can be minimized through a proxy strategy: learning aligned feature representations. The \texttt{AE} serves precisely this function. Its attention mechanism identifies and fuses relevant cross-stream information, effectively creating a shared representation subspace that reduces task divergence and thus tightens the generalization bound.

\paragraph{Routing Network (\texttt{RN}) as an Adaptive Trade-off Controller.}
The \texttt{RN} operationalizes the trade-off between collaboration and specialization. By learning to route each input, it dynamically determines the optimal degree of knowledge transfer for that specific instance. Through end-to-end optimization, the \texttt{RN} is incentivized to favor the AE when collaboration is beneficial and to rely on private experts otherwise. This behavior constitutes a learned, adaptive solution to balancing the terms in the generalization bound.

\paragraph{Autonomous Expert Tuner (\texttt{AET}) for Dynamic Stability.}
The \texttt{AET} extends this framework to the non-stationary streaming setting. As concept drift alters the underlying data distributions ($P^{(i)}$) and dissimilarity ($C(\{\mathcal{S}_j\})$), the \texttt{AET} maintains performance by adaptively managing the hypothesis space $\mathcal{F}$ itself. It ensures controlled capacity growth by instantiating new experts for new concepts while freezing past ones to prevent catastrophic forgetting. This modular plasticity ensures the generalization bound remains meaningful and the model stays robust across evolving data streams.

\section{B. Experiment Settings}
\subsection{B.1. Datasets}
\begin{table*}[t]
\centering
\begin{tabular}{ccccccl}
\toprule
\textbf{Scenarios} & \textbf{\#Datasets} & \textbf{\#Sample} & \textbf{\#Feature} & \textbf{\#Class} & \textbf{\#Drift type} \\
\midrule
\multirow{3}{*}{Set 1: Tree (Homo.)} 
    &  $\mathcal{S}_1$     & 5000  & 20  & 2  & Sudden/gradual\\
    &  $\mathcal{S}_2$     & 5000  & 20  & 2  & Sudden/gradual\\
    &  $\mathcal{S}_3$     & 5000  & 20  & 2  & Sudden/gradual\\
\midrule
\multirow{3}{*}{Set 2: Hyperplane (Homo.)} 
    &   $\mathcal{S}_1$   & 30000   & 4  & 2  & Incremental\\
    &   $\mathcal{S}_2$   & 30000   & 4  & 2  & Incremental\\
    &   $\mathcal{S}_3$   & 30000   & 4  & 2  & Incremental\\
\midrule
\multirow{3}{*}{Set 3 (Hete.)} 
   & SEAa & 10,000 & 3 & 2  & Sudden \\
 & RTG & 10,000 & 10 & 2  & No \\
& RBF & 10,000 & 10 & 2  & Incremental \\
\midrule
\multirow{3}{*}{Set 4 (Hete.)} 
    & LED       & 100,000 & 7  & 24 & Noise       \\
    & LEDDrift  & 100,000 & 24 & 24 & Unknown     \\
    & Waveform  & 100,000 & 39 & 3  & Noise       \\
\midrule
\multirow{3}{*}{Set 5: TV News (Homo.)} 
    & CNNIBN   & 30,000 & 124  & 2 & Unknown \\
    & BBC      & 30,000 & 124  & 2 & Unknown \\
    & TIMENEWS & 30,000 & 124  & 2 & Unknown \\
\midrule
\multirow{3}{*}{Set 6: Weather (Homo.)} 
    & $\mathcal{S}_1$ & 45000 & 8 & 2 & Unknown \\
    & $\mathcal{S}_2$ & 45000 & 8 & 2 & Unknown \\
    & $\mathcal{S}_3$ & 45000 & 8 & 2 & Unknown \\
\midrule
\multirow{3}{*}{Scenario 7: Credit Card (Hete.)} 
    & $\mathcal{S}_1$ & 30,000 & 5 & 2  &  Unknown \\
    & $\mathcal{S}_2$ & 30,000 & 6 & 2  &  Unknown \\
    & $\mathcal{S}_3$ & 30,000 & 12 & 2  &  Unknown \\
\midrule
\multirow{3}{*}{Scenario 8: Covertype (Hete.)} 
    & $\mathcal{S}_1$ & 581,012 & 10 & 7 & Unknown \\
    & $\mathcal{S}_2$ & 581,012 & 4  & 7 & Unknown \\
    & $\mathcal{S}_3$ & 581,012 & 40 & 7 & Unknown \\
\bottomrule
\end{tabular}
\caption{Multiple Data Streams Scenarios: characteristics of all datasets.}
\label{tab:datasets}
\end{table*}
To comprehensively evaluate CAMEL under diverse HML settings, we constructed a benchmark of eight multistream scenarios, comprising four synthetic and four real-world sets, as detailed in Table~\ref{tab:datasets}. Each scenario consists of three concurrent data streams.

\textbf{1) Synthetic Scenarios:} We established four distinct synthetic scenarios to systematically test the framework's capabilities against controlled challenges. 
Two scenarios feature \textit{homogeneous} feature spaces: \textit{Set 1 (Tree)}~\cite{liu2020diverse} and \textit{Set 2 (Hyperplane)}~\cite{bifet2007learning}, for which the data generation process follows the methodology outlined in~\cite{yu2024online}.
To assess performance on feature heterogeneity, \textit{Set 3} is a \textit{heterogeneous} composite of three classic benchmarks: SEAa~\cite{Street2001}, RTG~\cite{Domingos2000}, and a stream generated by a radial basis function (RBF) generator~\cite{song2021segment}.
\textit{Set 4} further tests adaptability to varying data complexity and noise, comprising three well-known datasets from the River library~\cite{montiel2021river}: LED, LEDDrift, and Waveform.

\textbf{2) Real-World Scenarios:} To validate CAMEL's efficacy on practical tasks, we employ four real-world multistream benchmarks. 
For the \textbf{homogeneous} settings, we use the \textit{Set 5 TV News}\footnote{https://archive.ics.uci.edu/dataset/326/tv+news+channel\\+commercial+detection+dataset}~\cite{vyas2014commercial} and \textit{Set 6 Weather}~\cite{ditzler2012incremental} datasets. Following the procedure in~\cite{yu2024online}, we partition the TV News data into three streams (CNNIBN, BBC, TIMES) and select three representative streams from the Weather dataset.
For the \textbf{heterogeneous} settings, we create two scenarios from widely-used datasets. In \textit{Set 7 (Credit Card)}\footnote{https://www.kaggle.com/datasets/samuelcortinhas/credit-card-classification-clean-data/data}, we split the original dataset into three streams based on distinct user payment behaviors, resulting in different feature spaces. Similarly, for \textit{Set 8 (Covertype)}\footnote{https://archive.ics.uci.edu/dataset/31/covertype}, we partition the data into three streams according to different feature categories, creating another challenging heterogeneous scenario.

\subsection{B.2. Baselines}
\label{app:baselines}
To validate the performance of our proposed \mymethod~framework, we conduct a comprehensive comparison against two categories of state-of-the-art methods: established single-stream online learning algorithms and contemporary multistream classification frameworks. For all baselines, we adhere to the parameter settings recommended in their original publications, ensuring a fair and rigorous evaluation. 


\subsubsection{Single-Stream Baselines.}
These methods represent the standard approach where each data stream is learned independently without any knowledge fusion. We apply each of these algorithms to every stream in our scenarios and report the average performance.
\begin{itemize}
    \item Streaming Random Patches (SRP)~\cite{gomes2019streaming}: An ensemble method for data streams that learns from random patches of features. We utilize its random subspace mode as a robust baseline.
    \item Aggregated Mondrian Forest (AMF)~\cite{mourtada2021amf}: A highly efficient online random forest algorithm based on Mondrian processes, well-suited for evolving data.
    \item Incremental Weighted Ensemble (IWE)~\cite{jiao2022incremental}: A chunk-based ensemble method that adapts to concept drifts by dynamically weighting its base learners.
\end{itemize}

\subsubsection{Multistream Classification Baselines.}
This category includes recent methods specifically designed for multistream learning, although they primarily target homogeneous data settings. To adapt them to our n-stream scenarios, we follow a common evaluation protocol: for any given target stream $\mathcal{S}_i$, the remaining $n-1$ streams serve as the source streams.
\begin{itemize}
    \item MCMO~\cite{jiao2023reduced}: A multistream classification framework based on multi-objective evolutionary optimization.
    \item OBAL~\cite{yu2024online}: An online boosting adaptive learning algorithm that dynamically weights source streams based on drift-awareness.
    \item BFSRL~\cite{yu2024fuzzy}: A method that learns fuzzy shared representations to handle correlations across multiple streams.
\end{itemize}
It is important to note that these methods are not inherently designed for the full spectrum of HML challenges, particularly heterogeneous feature and label spaces. 


\subsection{B.3. Implementation Details}
\label{appendix:imple}

\begin{figure*}[thbp]
    \centering
    \begin{subfigure}[t]{\columnwidth}
        \centering
      \includegraphics[width=\linewidth]{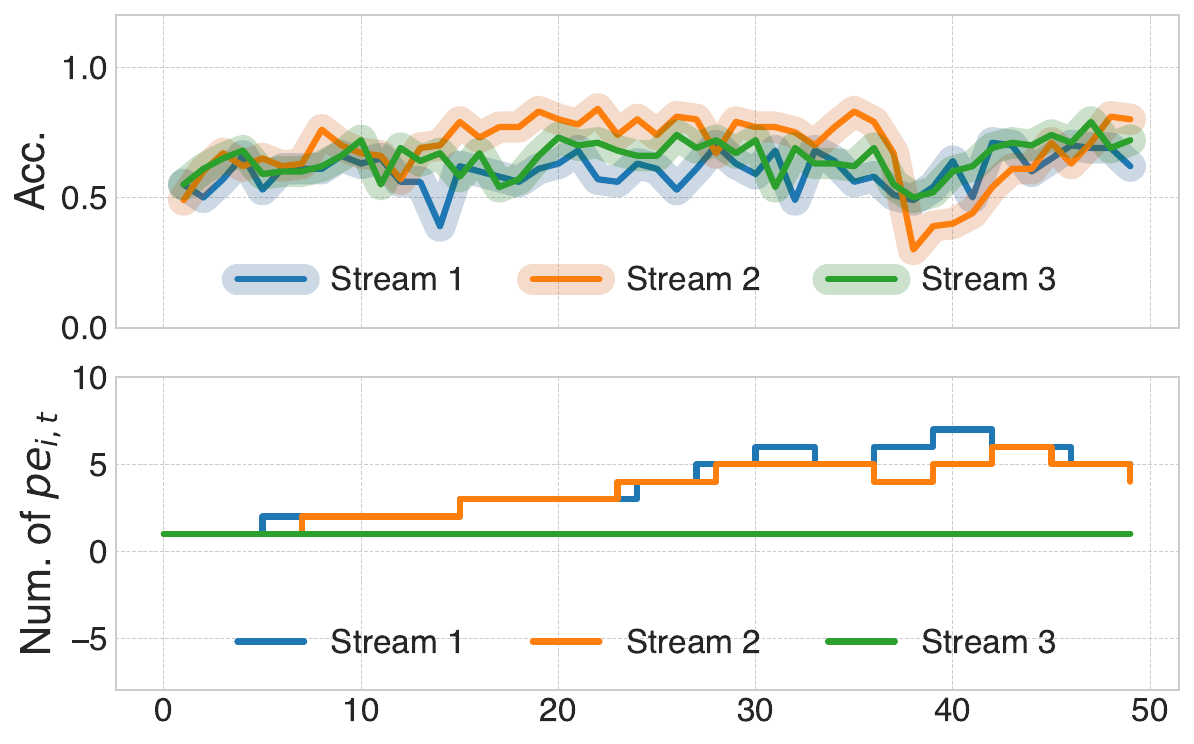}
        \caption{Set 1 Tree}
        \label{fig:online_tree}
    \end{subfigure}
    \begin{subfigure}[t]{\columnwidth}
        \centering
        \includegraphics[width=\linewidth]{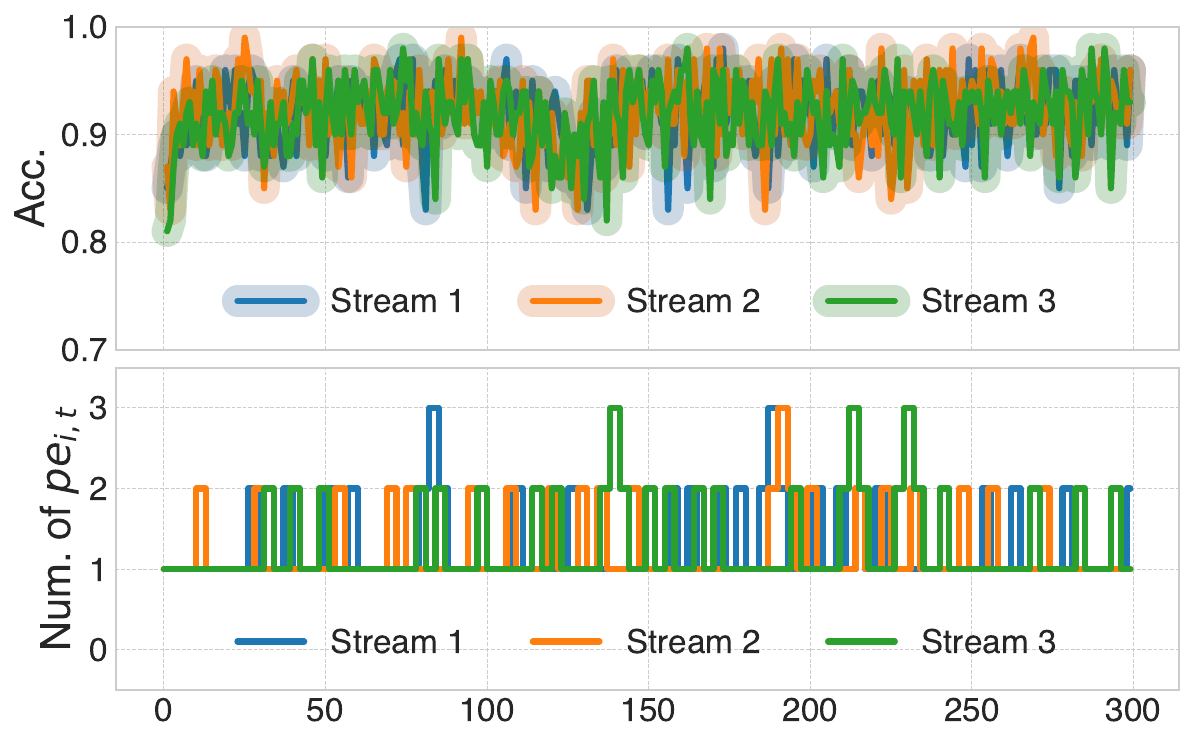}
        \caption{Set 2 Hyperplane}
        \label{fig:online_hyper}
    \end{subfigure}
    \begin{subfigure}[t]{\columnwidth}
        \centering
        \includegraphics[width=\linewidth]{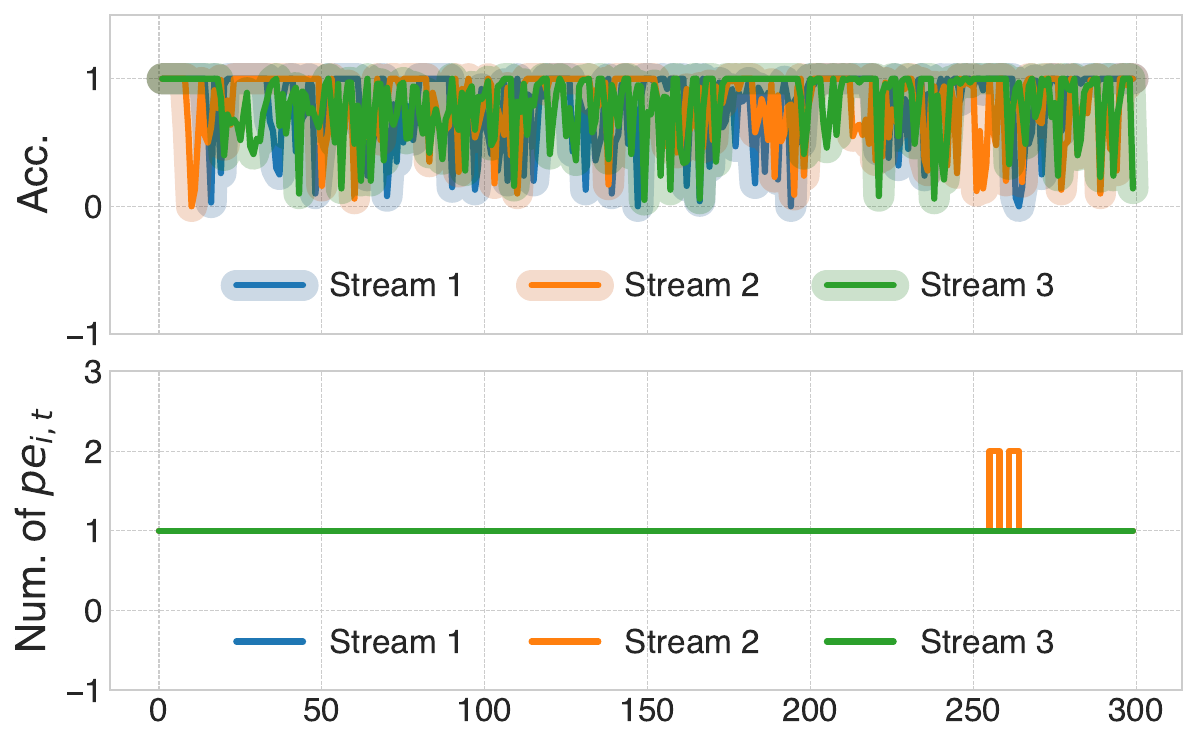}
        \caption{Set 5 TV News}
        \label{fig:online_tv}
    \end{subfigure}
    \begin{subfigure}[t]{\columnwidth}
        \centering
        \includegraphics[width=\linewidth]{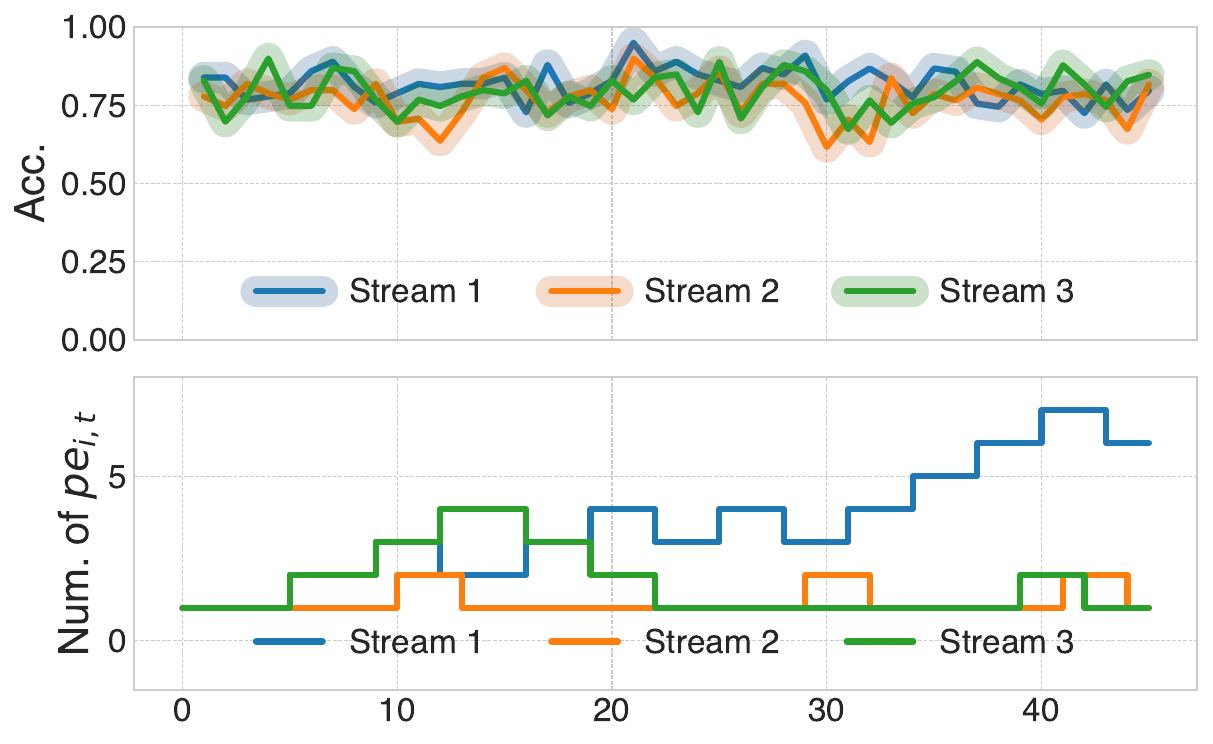}
        \caption{Set 6 Weather}
        \label{fig:online_weather}
    \end{subfigure}
    \begin{subfigure}[t]{\columnwidth}
        \centering
        \includegraphics[width=\linewidth]{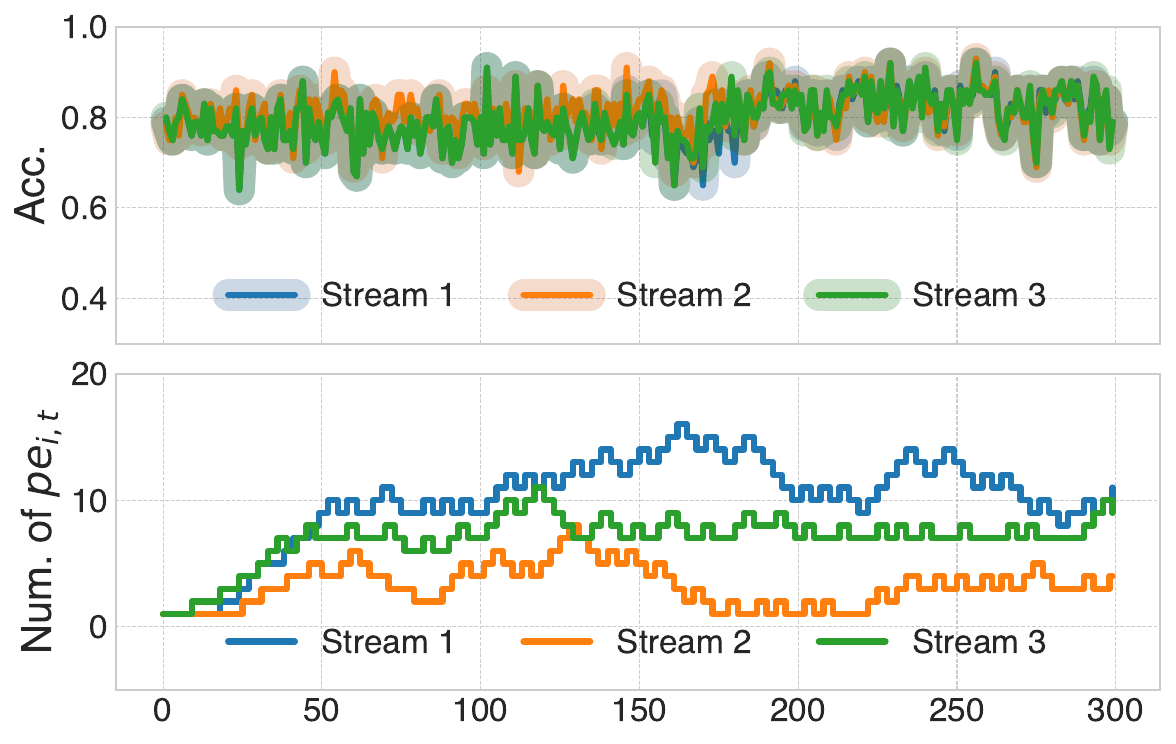}
        \caption{Set 7 Credit Card}
        \label{fig:online_credit}
    \end{subfigure}
    \begin{subfigure}[t]{\columnwidth}
        \centering
        \includegraphics[width=\linewidth]{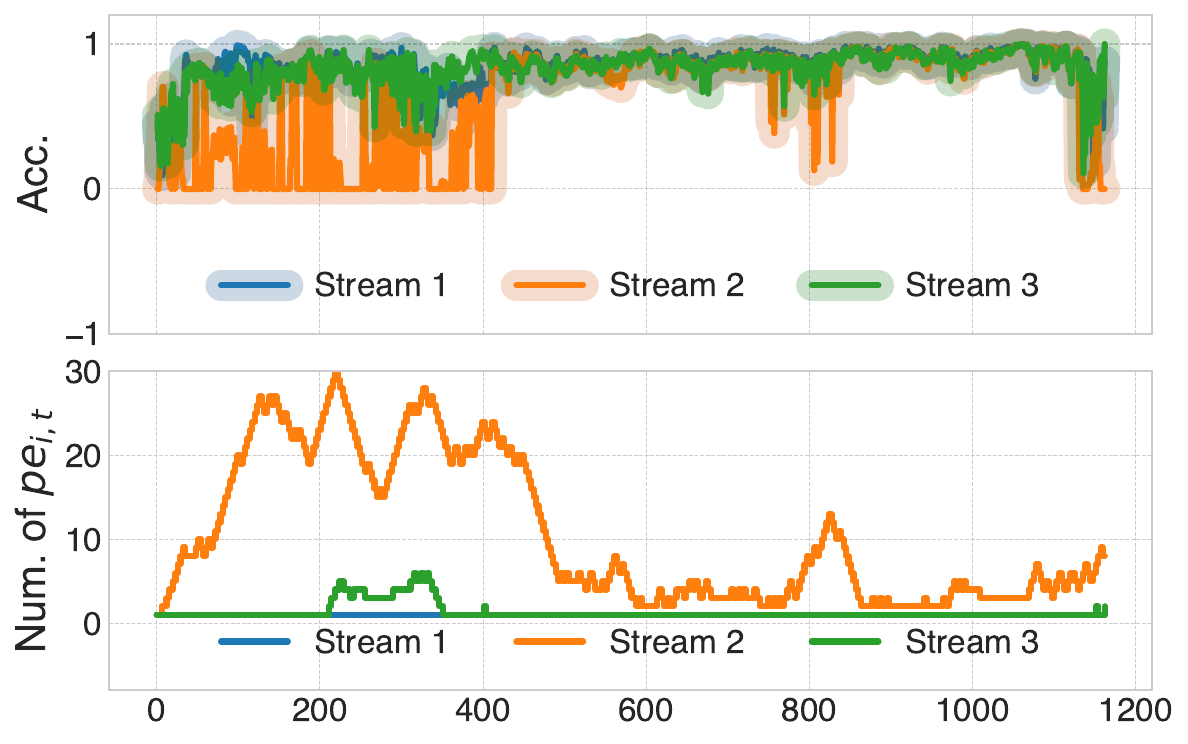}
        \caption{Set 8 Covertype}
        \label{fig:online_cover}
    \end{subfigure}
    \caption{Online accuracy and the corresponding number of private experts over time.}
    \label{fig:online_app}
\end{figure*}

We implement our CAMEL framework in PyTorch. The specific architectural configurations and training hyperparameters are detailed below.

\subsubsection{Network Architecture.}

Our architecture comprises the following key modules per stream $\mathcal{S}_i$:
The Feature Extractor ($\texttt{FE}_i$: Linear ($D_i \to 50$) $\to$ ReLU $\to$ Linear ($50 \to 50$) $\to$ ReLU $\to$ Linear ($50 \to D_h$)) maps stream-specific inputs ($D_i$) to a shared latent space ($D_h$). 
Private Experts ($\texttt{PE}_i(t)$) and the Assistance Expert ($\texttt{AE}_i$) both process features into refined representations of dimension $D_f$: 
$\texttt{PE}_i(t)$ uses Linear ($D_h \to 50$) $\to$ ReLU $\to$ Linear ($50 \to 50$) $\to$ ReLU $\to$ Linear ($50 \to D_f$), 
while $\texttt{AE}_i$ employs multi-head attention (2 heads) on $\boldsymbol{h}_{i,t}$, then processes the concatenated $[\boldsymbol{h}_{i,t}; \boldsymbol{c}_{i,t}]$ (dim $2D_h$) via Linear($2D_h \to 50$) $\to$ ReLU $\to$ Linear ($50 \to D_f$). 
The Routing Network ($\texttt{RN}_i$: Linear ($D_h \to 50$) $\to$ ReLU $\to$ Linear ($50 \to 50$) $\to$ ReLU $\to$ Linear ($50 \to K(t)$) $\to$ Softmax) generates expert weights. A Classification Head ($\texttt{CH}_i$: Linear($D_f \to C_i$)) produces logits for the stream's label space ($C_i$).

\begin{figure*}[thbp]
    \centering
    \begin{subfigure}[t]{0.32\linewidth}
        \centering
      \includegraphics[width=\linewidth]{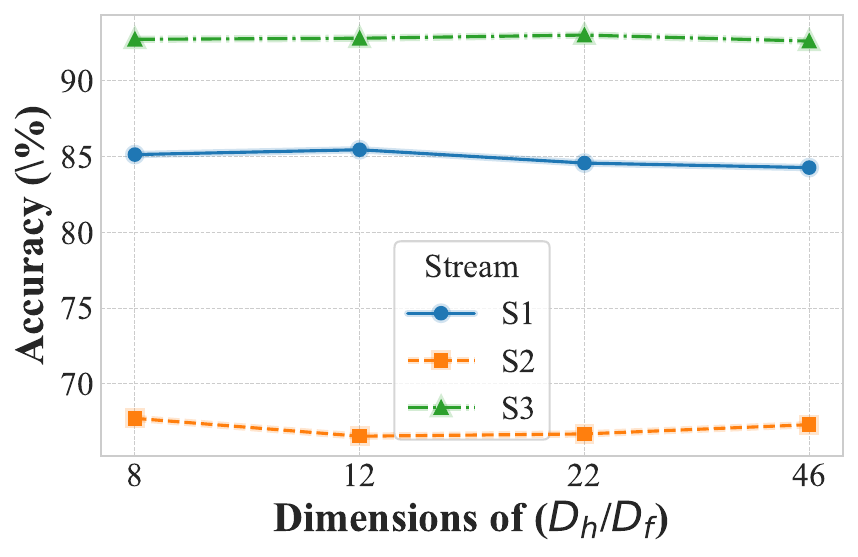}
        \caption{$D_h/D_f$.}
        \label{fig:dimension_SEA}
    \end{subfigure}
    \begin{subfigure}[t]{0.32\linewidth}
        \centering
        \includegraphics[width=\linewidth]{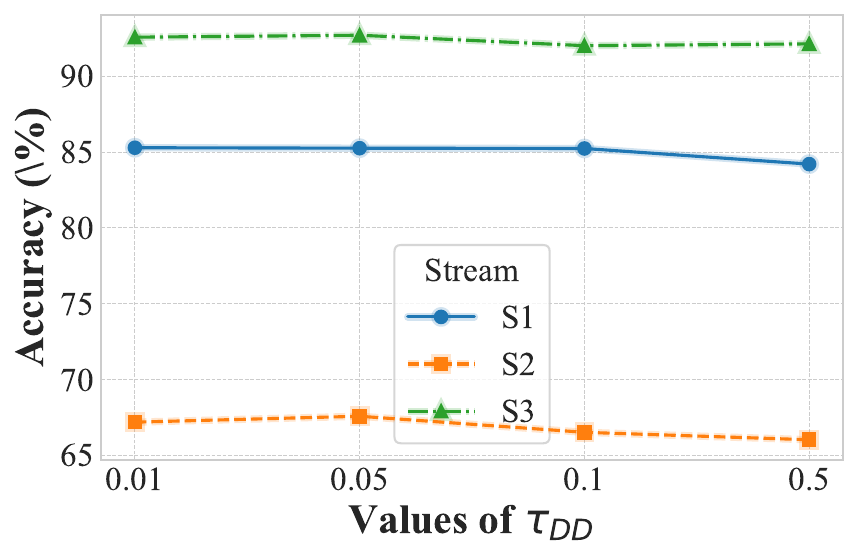}
        \caption{$\tau_{DD}$.}
        \label{fig:DD_SEA}
    \end{subfigure}
    \begin{subfigure}[t]{0.32\linewidth}
        \centering
        \includegraphics[width=\linewidth]{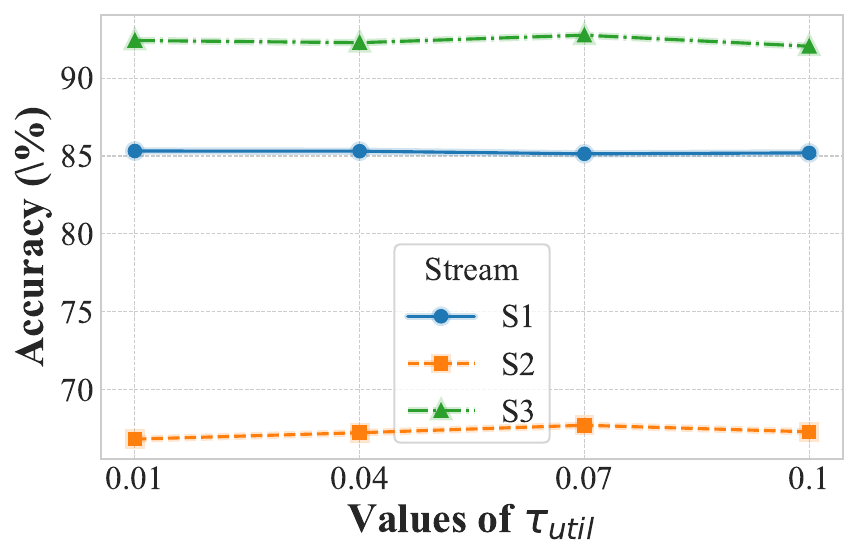}
        \caption{$\tau_{util}$.}
        \label{fig:ultis}
    \end{subfigure}
    \caption{Parameter analysis on Set 3: SEAa, RTG, RBF.}
    \label{fig:para_SEA}
\end{figure*}

\subsubsection{Online Learning and Adaptation Parameters.}
We configure the online learning process with the following hyperparameters:
The window-based prequential protocol uses a non-overlapping window size $|W|=500$ for Covertype and $|W|=100$ for other datasets. The model is initialized with 100 epochs of training on $W_0$; subsequent windows ($W_t, t \geq 1$) train for 30 epochs using the Adam optimizer ($\eta=1 \times 10^{-4}$).
Each stream's MMD-based drift detector ($\texttt{DD}_i$) employs an RBF kernel ($\sigma=0.15$) with a reference window size $|W|/4$.
The Autonomous Expert Tuner ($\texttt{AET}_i$) enforces a 2-window cooldown between architectural changes. Pruning occurs if a private expert's long-term average utilization falls below $\tau_{util}$ (minimum 1 expert per stream). New expert instantiation is triggered by a drift signal from $\texttt{DD}_i$ coupled with a significant performance drop (tracked via a lookback window of 5 and drop factor 0.95).

We implemented the framework using the PyTorch library. All experimental evaluations were conducted on a server equipped with 187GB of memory and powered by an Intel(R) Xeon(R) Gold 6226R CPU @ 2.90GHz.


\subsection{C. Supplementary Experiments}
\textbf{C.1. Online Performance.}
To further validate the robustness and generality of our \mymethod~framework, we present the online performance and dynamic adaptation behavior on the remaining six experimental scenarios, as shown in Figure~\ref{fig:online_app}  Across most scenarios, results reinforce key findings: On synthetic \textit{Set 1 (Tree)} and \textit{Set 2 (Hyperplane)} (Figures~\ref{fig:online_tree} and~\ref{fig:online_hyper}), the Autonomous Expert Tuner ($\texttt{AET}$) actively manages private experts in response to frequent drifts, sustaining high accuracy. Similarly, on real-world \textit{Set 7 (Credit Card)} (Figure~\ref{fig:online_credit}), gradual expert growth reflects continuous adaptation to evolving payment patterns, correlating with accuracy gains. The behavior on \textit{Set 6 (Weather)} and \textit{Set 8 (Covertype)} also demonstrates capacity adjustments corresponding to the underlying data complexity.

A particularly insightful case is presented by the \textit{Set 5 (TV News)} dataset (Figure~\ref{fig:online_tv}). Despite high accuracy volatility across streams, each stream's $\texttt{AET}$ maintains stable expert counts. This reveals that performance volatility in real-world streams can stem from numerous factors beyond true concept drift, such as sampling noise and feature-label ambiguity. Relying solely on a performance-drop trigger would likely lead to excessive and spurious architectural changes. However, $\texttt{AET}$ requires both a significant performance drop and an MMD-based drift signal ($\texttt{DD}$) to instantiate new experts. This conjunctive condition filters noise, correctly identifying TV News volatility as non-distributional shift, thus preventing unnecessary adaptations and preserving stability. This validates the necessity of integrating performance-based and distributional signals for robust online adaptation.

\noindent
\textbf{C.2. Parameter Sensitivity.}
We analyze the sensitivity of \mymethod~to its three key hyperparameters: the feature space dimensions ($D_h, D_f$, where we set $D_h=D_f$), the drift detection threshold ($\tau_{DD}$), and the expert pruning threshold ($\tau_{util}$). Figure~\ref{fig:para_SEA} illustrates the results on the heterogeneous Set 3, which is representative of the general findings.

We determined feature dimensions ($D_h$, $D_f$) based on the streams' average input dimension $D_{avg}$, testing values in $\{ \frac{1}{3}D_{avg}, \frac{1}{2}D_{avg}, D_{avg}, 2D_{avg} \}$ (ensured even for 2-head attention). Thresholds were tested over $\tau_{DD} \in \{0.01, 0.05, 0.1, 0.5\}$ and $\tau_{util} \in \{0.01, 0.04, 0.07, 0.1\}$.
As shown in Figure~\ref{fig:para_SEA}, the performance remains remarkably stable across all tested values for these parameters. This demonstrates that \mymethod~is robust and not overly sensitive to the precise setting of these key hyperparameters, which reduces the burden of parameter tuning. Detailed parameter settings in our experiments are shown in Table~\ref{tab:parameters}. 

\begin{table}[htbp]
  \centering
    \begin{tabular}{cccc}
    \toprule
          & $D_h$/$D_f$ & $\tau_{DD}$ & $\tau_{util}$ \\
    \midrule
    Set 1 & 20    & 0.05   & 0.07 \\
    Set 2  & 4     & 0.05   & 0.07 \\
    Set 3  & 8     & 0.07   & 0.1 \\
    Set 4 & 30    & 0.1     & 0.1 \\
    Set 5 & 124   & 0.5     & 0.1 \\
    Set 6 & 8     & 0.1     & 0.05 \\
    Set 7 & 8     & 0.07   & 0.05 \\
    Set 8 & 30    & 0.08   & 0.07 \\
    \bottomrule
    \end{tabular}%
      \caption{Parameter settings on different HML settings.}
  \label{tab:parameters}%
\end{table}%

\end{document}